
\documentclass[10pt,journal,compsoc]{IEEEtran}
%
\usepackage{amsfonts}
\usepackage{bm}
\usepackage{amsthm}
\usepackage{wrapfig}
\usepackage{enumitem}
\usepackage{multirow}
\usepackage{array}
\usepackage{ragged2e}
\usepackage{booktabs}
%

\newtheorem{assumption}{Assumption}

\newtheorem{theorem}{Theorem}
\newtheorem{lemma}{Lemma}

\newtheorem{fact}{Fact}
\theoremstyle{remark}
\newtheorem*{remark}{Remark}

\usepackage[]{subfig}
\usepackage{hyperref}
\usepackage{url}
\usepackage{algorithm}
\usepackage[noend]{algpseudocode}


%

 \usepackage{color}

%
\ifCLASSOPTIONcompsoc
  \usepackage[nocompress]{cite}
\else
  \usepackage{cite}
\fi
%

%
\ifCLASSINFOpdf
   \usepackage[pdftex]{graphicx}
   \graphicspath{{../pdf/}{../jpeg/}}
   \DeclareGraphicsExtensions{.pdf,.jpeg,.png}
\else
\fi
%
%

%
\usepackage{amsmath}
%

%

%
\usepackage{array}
\hyphenation{op-tical net-works semi-conduc-tor}

\begin{document}
%
\title{Achieving Personalized Federated Learning with Sparse Local Models}
%
%
%
%

\author{Tiansheng~Huang,~
        Shiwei~Liu,~
        Li~Shen,~
        Fengxiang~He,~
        Weiwei~Lin,~
        and~Dacheng~Tao,~\IEEEmembership{Fellow,~IEEE}
\IEEEcompsocitemizethanks{
\IEEEcompsocthanksitem This work was done when Tiansheng Huang and Shiwei Liu worked as interns at JD Explore Academy. Li Shen is the corresponding author.
\IEEEcompsocthanksitem T. Huang and W. Lin are with the Department
of Computer Science and Engineering, South China University of Technology, Guangzhou, China.\protect\\
E-mail: tianshenghuangscut@gmail.com, linww@scut.edu.cn.
\IEEEcompsocthanksitem S. Liu is with Eindhoven University of Technology, Netherlands.\protect\\
E-mail: s.liu3@tue.nl.
\IEEEcompsocthanksitem L. Shen, F. He and D. Tao are with JD Explore Academy, Beijing, China.\protect\\
E-mail: mathshenli@gmail.com,  fengxiang.f.he@gmail.com, dacheng.tao@gmail.com.}
\thanks{Manuscript received xx xx, xxxx; revised xx xx, xxxx.}}

%
%

\markboth{Journal of \LaTeX\ Class Files,~Vol.~xx, No.xx, xx~xxxx}%
{Shell \MakeLowercase{\textit{et al.}}: Bare Demo of IEEEtran.cls for Computer Society Journals}
%



\IEEEtitleabstractindextext{%
\begin{abstract}
Federated learning (FL) is vulnerable to heterogeneously distributed data, since a common global model in FL may not adapt to the heterogeneous data distribution of each user. To counter this issue, personalized FL (PFL) was proposed to produce dedicated local models for each individual user. However, PFL is far from its maturity, because existing PFL solutions either demonstrate unsatisfactory generalization towards different model architectures or cost enormous extra computation and memory.  In this work, we propose federated learning with personalized sparse mask (FedSpa), a novel PFL scheme that employs personalized sparse masks to customize sparse local models on the edge. Instead of training an intact (or dense) PFL model, FedSpa only maintains a fixed number of active parameters throughout training (aka sparse-to-sparse training), which enables users' models to achieve personalization with cheap communication, computation, and memory cost. 
We theoretically show that the iterates obtained by FedSpa converge to the local minimizer of the formulated SPFL problem at the rate of  $\mathcal{O}(\frac{1}{\sqrt{T}})$. 
Comprehensive experiments demonstrate that FedSpa significantly saves communication and computation costs, while simultaneously achieves higher model accuracy and faster convergence speed against several state-of-the-art PFL methods.
\end{abstract}

\begin{IEEEkeywords}
Dynamic sparse training, federated learning, model compression, personalized federated learning. 
\end{IEEEkeywords}}

\maketitle

\IEEEdisplaynontitleabstractindextext

%
\IEEEpeerreviewmaketitle

\IEEEraisesectionheading{\section{Introduction}\label{sec:intro}}

Data privacy raises increasingly intensive concerns, and governments have enacted legislation to regulate the privacy intrusion behavior of mobile users, e.g., the General Data Protection Regulation \cite{voigt2017eu}. 
Traditional distributed learning approaches, requiring massive users' data to be collected and transmitted to a central server for training, soon may no longer be realistic under the increasingly stringent regulations on users' private data.
\par 
On this ground, federated learning (FL), a distributed training paradigm emerges as a successful solution to cope with privacy concerns, which allows multiple clients to perform model training within the local device without the necessity to exchange the data to other entities. 
In this way, the data privacy leakage problem could be potentially relieved.
\begin{figure}[!t]
	\centering
	\includegraphics[width=3.2in]{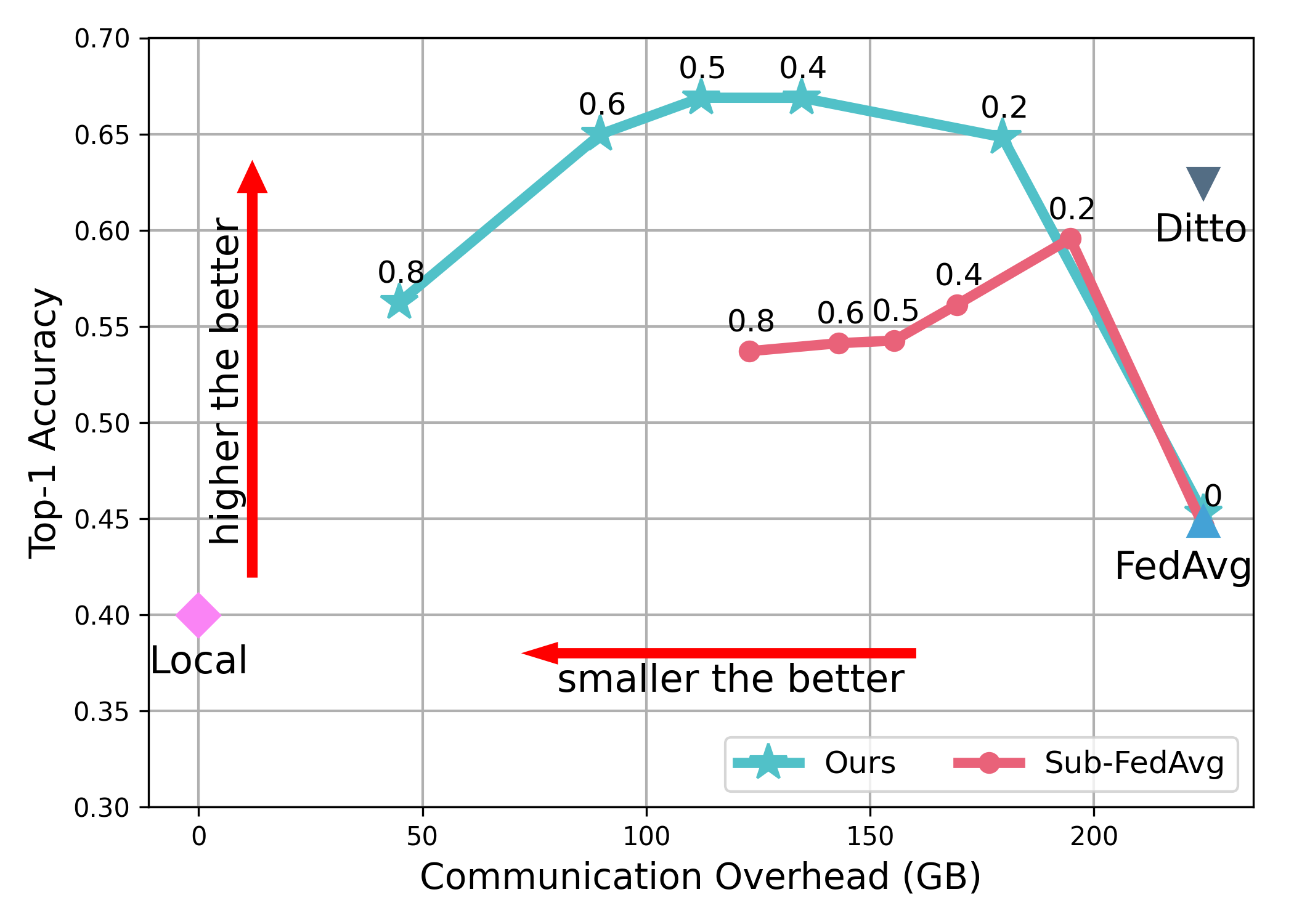}
	\caption{Performance of FedSpa and several baselines w.r.t. communication cost in Non-IID setting. Numbers above FedSpa and Sub-FedAvg are sparsity. We show that the optimal sparsity (that attains the highest accuracy) is between 0.4 and 0.5. In other words, accuracy is increased while  communication and computation overhead during training are reduced.}
	\label{performace-FedSpa}
	\vspace{-20pt}
\end{figure}
\par
\begin{figure*}
	\centering
	\vspace{-40pt}
	\includegraphics[ width=19cm]{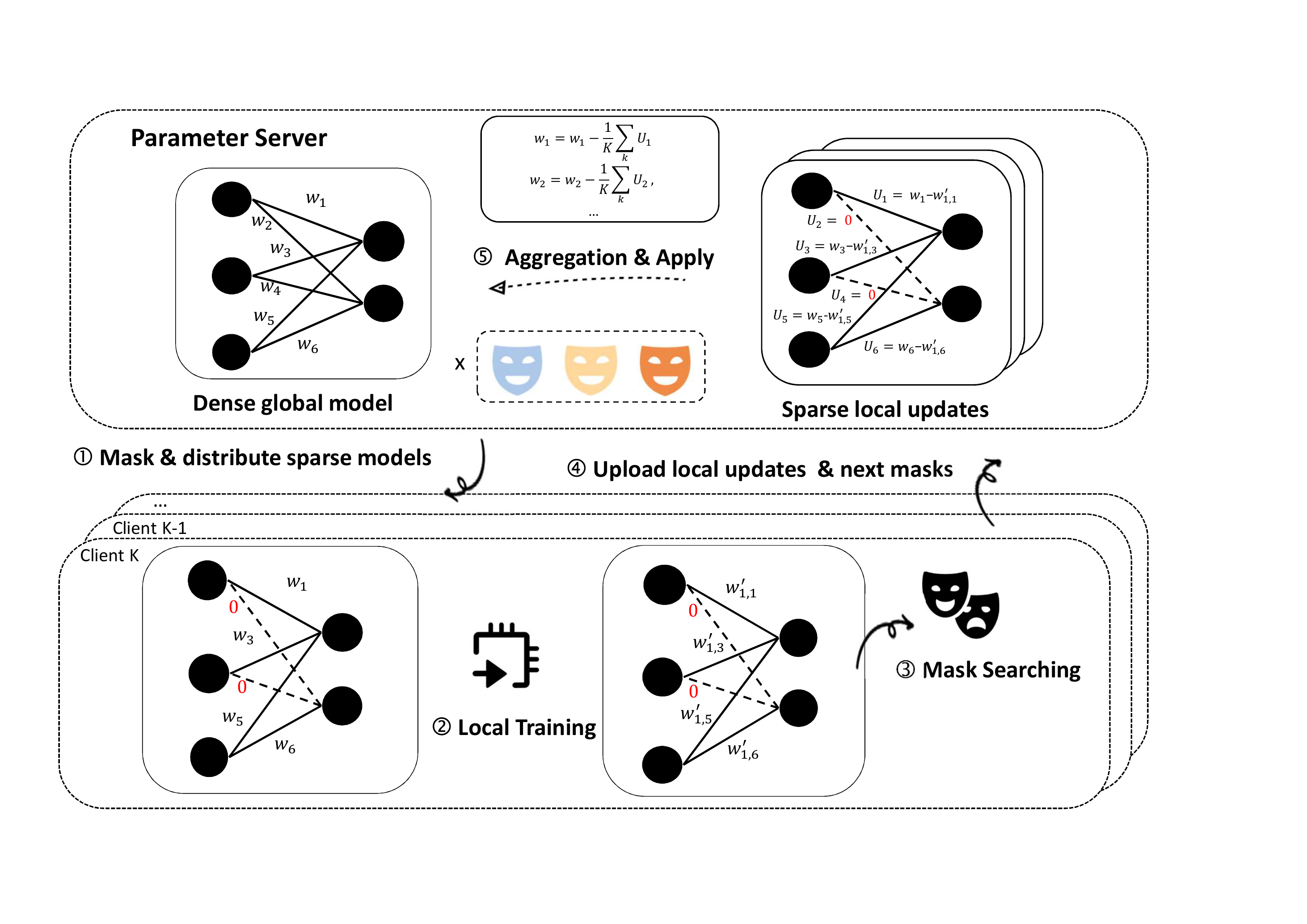}
	\vspace{-50pt}
	\caption{Overview of FedSpa. Firstly, the server multiplies the personalized masks and the global model to produce sparse models, which are then distributed to clients. Secondly, clients do local training on a constantly sparse model. Thirdly, clients search for new personalized masks based on the model after trained and their local data.  Finally, clients upload the sparse gradient updates and the new masks to server, and the gradient updates are aggregated and applied to the global model. }
	\label{overview of FedSpa}
\end{figure*}

Despite the promising prospect, several notorious issues are afflicting  practical performance of FL:
\par
	 {\color{black}\emph{The global model produced by weight average (or FedAvg and its non-personalized variants) exhibits unsatisfactory performance in a Non-IID data distribution setting.}
		To alleviate this problem, the most popular idea is to integrate personalized features into the global model, and produce dedicated model for each local distribution.} However, how to make this integration is an open problem that remains unresolved. Prior works on personalized FL (PFL) zero in this issue, but the existing methods either demonstrate weak generalization towards different model architectures \cite{arivazhagan2019federated}, or require extra computation and storage \cite{li2021ditto}.

	  \emph{The communication and training overhead is prohibitively high for both the FL and PFL}. Clients in FL/PFL responsible for model training are mostly edge-devices with limited computation capacity and low bandwidth, and may not be powerful enough to fulfill a modern machine learning task with large deep neural networks.
	Existing studies \cite{li2020lotteryfl,vahidian2021personalized} integrate model compression into FL/PFL to save communication and computation overhead.  However, both methods embrace the technique of dense-to-sparse training, which still requires a large amount of communication at the beginning of training. In addition, how to effectively aggregate the dynamic sparse models is another challenging problem that remains unresolved. 
In this work, we propose FedSpa (see Figure \ref{overview of FedSpa}), which has two key features to counter the above two challenges: ({\bf i}) FedSpa does not deploy a single global model, but allows each client to own its unique sparse model masked by a personalized mask, which successfully alleviates the Non-IID  challenge. ({\bf ii})  FedSpa allows each client to train over an evolutionary sparse model with constant sparsity\footnote{\color{black} Sparsity specifies the ratio of parameters that are set to 0 (or inactive) in a model.} throughout the whole federated training process, which consistently alleviates the computation overhead of clients. 
Besides, all the local models in FedSpa are sparse models, which requires a smaller amount of communication cost in each communication round.   
Theoretically, we conclude that the proposed solution can achieve sub-linear convergence towards the formulated problem.
Empirically, in the Non-IID setting, we demonstrate that FedSpa \textbf{accelerates} the convergence (respectively 76.2\% and 38.1\% less communication rounds to reach the best accuracy of FedAvg \cite{mcmahan2016communication} and Ditto \cite{li2021ditto}), \textbf{increases} the final accuracy (up to 21.9\% and 4.4\% higher accuracy than FedAvg and Ditto, respectively), \textbf{reduces} the communication overhead (50\% less parameters communicated than the dense solutions),   and \textbf{lowers} the computation (15.3\% lower floating-point operations (FLOPs) than algorithms trained with fully dense model). To the end, we summarize our contribution as:
\begin{itemize}
	\item  We present a novel formulation of the sparse personalized FL (SPFL) problem, which {\color{black} can be applied} to various network architectures by enforcing personalized sparse masks to a global model. 
	\item  We propose a solution dubbed as FedSpa to solve the SPFL problem. By our novel design, FedSpa reduces the communication and computation overhead of the general FL solution. 
	\item Two sparse-to-sparse mask searching techniques are integrated as plugins of our solution. To adapt our PFL training context, we modify the DST-based mask searching technique to enable a warm-start of the searching process, which achieves superior performance. 
	\item We theoretically show that FedSpa obtains the convergence rate in the scale of $\mathcal{O}(\frac{1}{\sqrt{T}})$.  Experimental results conducted on different datasets and network models also demonstrate the superiority of FedSpa.
\end{itemize}
The remainder of this paper is organized as follows. In Section 2, we first provide a brief taxonomy over the recent works on PFL, and we subsequently discuss  previous research on sparse training, a key component of our proposed solution.  In Section 3, We formalize the sparse PFL (SPFL) problem  on which we make a brief discussion. In Section 4, our solution dubbed as FedSpa is proposed. Two mask-searching techniques are proposed and integrated into the FedSpa framework. In the same section, we make a conclusion on FedSpa's theoretical convergence property. Experiment results are given  and briefly discussed in Section 5. At last, conclusion and future prospect are made in Section 6.  
\section{Related Works} 

Federated learning (FL) \cite{mcmahan2016communication} is seriously afflicted by the issue of heterogeneously distributed (or Non-IID) data. Personalized FL (PFL), initiated by recent literature \cite{li2021ditto,arivazhagan2019federated}, is shown to be effective to counter this issue of FL. In this work, we propose an alternative yet effective way to enhance PFL with personalized sparse models.

\subsection{Personalized Federated Learning}
We categorize PFL into five genres.

Firstly, PFL via layer partition, e.g., FedPer \cite{arivazhagan2019federated}, LG-FedAvg \cite{liang2020think}, FedRep \cite{collins2021exploiting},  is to divide the global model layers into shared layers and personalized layers. For the shared layers, weights average as in FedAvg is adopted, while for personalized layers, models are trained only locally and will not be exchanged with others. 

Secondly, PFL via regularization, e.g., Ditto \cite{li2021ditto}, L2GD \cite{hanzely2020federated} is to add a proximal term on the local model to force the local model and global model closely in the local model fine-tuning stage. 


Thirdly, PFL via model interpolation, e.g., MAPPER \cite{mansour2020three}, APFL \cite{deng2020adaptive1}  achieves personalization by linearly interpolating the weights of the cluster (global) model and local model as the personalized model. 

Fourthly, PFL via transfer learning,  e.g.,  FedMD \cite{li2019fedmd}, FedSteg \cite{yang2020fedsteg},  and Fedhealth \cite{chen2020fedhealth}, is to either use model and domain-specific local fine-tuning or knowledge distillation to adapt the global model into the personalized model.

Finally, PFL via model compression, e.g.,  LotteryFL \cite{li2020lotteryfl} and Sub-FedAvg \cite{vahidian2021personalized},  achieves personalization via employing principle model compression techniques, such as weight pruning and channel pruning, over the shared global model.

\subsection{Sparse Deep Neural Networks}
Methods to sparsify neural networks can be classified into two genres: dense-to-sparse methods and sparse-to-sparse methods. 

Dense-to-sparse methods train from a dense model, and compress the model along the training process. Iterative pruning, first proposed by \cite{frankle2018lottery}, shows promising performance in dynamically searching for a sparse yet accurate network. 

Recently, sparse-to-sparse methods have been proposed to pursue training efficiency. Among them, dynamic sparse training (DST)~\cite{bellec2018deep,evci2020rigging,liu2021we} is the most successful technique that allows sparse networks, trained from scratch, to match the performance of their dense equivalents. Stemming from the first work -- sparse evolutionary training~\cite{mocanu2018scalable,liu2020sparse}, DST has evolved as a class of sparse training methods absorbing many advanced techniques, e.g., weight redistribution~\cite{mostafa2019parameter,dettmers2019sparse}, gradient-based regrowth~\cite{dettmers2019sparse,evci2020rigging}, and extra weight exploration~\cite{jayakumar2020top,liu2021we}.

\subsection{Discussion on model-compression-based PFL}
Our work also achieves personalization via model compression. We emphasize that three main progresses are made towards SOTA compression-based PFL: ({\bf i}) We rigorously formulate the sparse personalized FL problem, filling the gap left by the prior works. ({\bf ii}) While prior works either vaguely describe their model aggregation as  "aggregating the Lottery Ticket Network via FedAvg" \cite{li2020lotteryfl}, or "taking the average on the intersection of unpruned parameters in the network" \cite{vahidian2021personalized}, we explicitly formulate the aggregation as averaging the sparse update from clients. ({\bf iii}) Both the two prominent prior works utilize the idea of iterative pruning to prune the network from dense to sparse. We instead provide two sparse-to-sparse training alternatives to plug in our solution, which largely reduces the costs of communication at the beginning of the training process, and exhibits remarkable performance. 


\section{Problem Formulation}

We assume a total number of $K$ clients within our FL system, and we consistently use $k$ to index a specific client. First, we give a preliminary introduction on the general FL problem. 

\textbf{General FL problem.} 
Let $\bm w \in \mathbb{R}^d $ be the global weight. General FL takes the formulation as below
\begin{equation*}
	\label{fedavg ultimate}
	\begin{split}
		& \text{(P1)} \quad \min_{\bm w} \tilde{f}(\bm w)= \frac{1}{K}\sum_{k=1}^{K}  \tilde{F}_{k}(\bm w)   \\ 
		\text{s.t.} \quad  &\tilde{F}_{k}(\bm w) = \mathbb{E} [ \mathcal{L}_{(\bm x,  y) \sim \mathcal{D}_k}(\bm w; (\bm x,  y))]
	\end{split}
\end{equation*}
where $\mathcal{D} = \mathcal{D}_1\cup \dots \cup\mathcal{D}_K$ is the joint distribution of $k$ local heterogeneous distributions,  $(\bm x, y)$ denotes one piece of data that is  uniformly sampled wrt distribution $\mathcal{D}_k$ . $\mathcal{L}(\cdot;\cdot)$ is the loss corresponds to the model weights and data.

\textbf{Sparse PFL problem.} 
By introducing personalized masks into FL,  we alternatively derive the SPFL problem as follows:

\begin{equation*}
	\label{fl goal}
	\begin{split}
		\text{(P2)} \qquad &\min_{\bm w}  {f}(\bm w)=\frac{1}{K} \sum_{k=1}^{K}   {F}_{k}( \bm m_k^* \odot \bm w )   ,  \\  
		\text{s.t.} \quad  & {F}_{k}( \bm m_k^* \odot \bm w ) = \mathbb{E}[  \mathcal{L}_{(\bm x, y) \sim \mathcal{D}_k}( \bm m_k^* \odot \bm w; (\bm x, y)) ]
	\end{split}
\end{equation*}
where $\bm m_k^* \in \{0,1\}^d$ is a personalized sparse binary mask for $k$-th client.
$\odot$ denotes the Hadamard product for two given vectors.
Our ultimate goal is to find a global model $\bm w$, such that the personalized model for $k$-th client can be extracted from the global model by personalized mask $\bm m_k^*$, i.e., $\bm m_k^* \odot \bm w$. The element of $\bm m_k^*$ being $1$ means that the weight in the global model is active for $k$-th personalized model, otherwise, remains dormant.  Thus, the information exchange between all personalized models is enforced by a shared global model $\bm w$.

Compared with existing PFL algorithms, solving our SPFL problem (P2) does not sacrifice additional computation and storage overhead of clients, since we do not maintain both personalized local models and global model in clients as  \cite{li2021ditto,mansour2020three}. On contrary, the solution to our problem could potentially lower the communication and computation overhead. Moreover, our prposed SPFL problem (P2) {\color{black} can be applied to most of the model architectures without model-specific hyper-parameter tuning}, since we do not make model-specific separation of the public and personalized layer as in \cite{arivazhagan2019federated,liang2020think,collins2021exploiting}, or domain-specific fine-tuning as in \cite{chen2020fedhealth,yang2020fedsteg}.

\section{FedSpa: solution for SPFL}  
In this section, we first introduce our proposed FedSpa in Algorithm \ref{algorithm1}. Then,  we specify the update rule of global model, and two sparse-to-sparse mask searching methods that can be plugged in the update process. At last, we give a theoretical analysis on evaluating the quality of the iterates of FedSpa with respect to the ultimate PFL problem (P2). 

\subsection{ Global Model Update  for FedSpa}
\label{sgd solution}
\textbf{\color{black} Data Parallel-based Update.} 
We first propose the following iterative update to solve problem (P2) 
\begin{equation}
	\label{one step average trivial}
	\begin{split}
		\bm w_{t+1} =  \bm w_{t} -  \frac{\eta}{K} \sum_{k=1}^K  \bm m_{k}^* \odot \nabla_{\tilde{\bm w}_{k,t} } \mathcal{L}(  \tilde{\bm w}_{k,t}; \xi_{k,t}),
	\end{split}
\end{equation}
where $\xi_{k,t}$ is a batch of data that is uniformly sampled from the $k$-th client's local distribution $\mathcal{D}_k$, $\eta_t$ is the learning rate for iteration $t$, and  $\tilde{\bm w}_{k,t} = \bm m_{k}^* \odot \bm w_{t}$ is the sparse weights sparsified by mask $\bm m_{k}^*$. However, the optimal personalized masks $\{\bm m_k^*\}$ are generally not accessible to us in the solution process.
Let $\bm m_{k,t}$ be an intermediate surrogate personalized mask of $\bm m_{k}^*$. We  subsequently rewrite Eq.~(\ref{one step average trivial}) as follows:
\begin{equation}
	\label{one step average}
	\begin{split}
		\bm w_{t+1} =  \bm w_{t} -  \frac{\eta}{K} \sum_{k=1}^K  \bm m_{k,t} \odot \nabla_{\tilde{\bm w}_{k,t} } \mathcal{L}(  \tilde{\bm w}_{k,t}; \xi_{k,t}).
	\end{split}
\end{equation}
For our proposed update rule, it is worth mentioned that: ({\bf i}) Some coordinates of the model weights have been made zero before doing the forward process, i.e., not all the parameters have to be involved when calculating  $\mathcal{L}( \tilde{\bm w}_{k,t};\xi_{k,t})$. This means that the computation overhead in the forward process could be potentially saved. ({\bf ii}) In the backward process, the stochastic gradient $\nabla_{\tilde{\bm w}_{k,t} } \mathcal{L}( \tilde{\bm w}_{k,t};\xi_{k,t})$ is masked again by $\bm m_{k,t}$, which means that we do not need to backward the gradient for those sparse coordinates. Thus, the computation cost can be largely saved.

{\color{black}\textbf{FL-adapted Update.}}
To save the communication overhead, we integrate the idea from  \textit{local SGD} \cite{stich2018local} and \textit{partial participation} to our solution.  Let $\tilde{\bm w}_{k,t,\tau}$ denote the weights before doing $(\tau+1)$-th step of local SGD and set $\tilde{\bm w}_{k,t,0}= \bm m_{k,t} \odot \bm  w_t$ (i.e., the local weights will be synchronized every $N$ steps with the global weights). Then for each local step  $\tau=0, 1, \dots, N-1,$,  each client $k \in S_t$ updates its model as below:
\begin{equation}
	\begin{split}
		\label{local SGD update2}
		\tilde{\bm w}_{k, t, \tau+1}  = \tilde{\bm w}_{k,t, \tau}- \eta \bm m_{k,t} \odot \nabla_{\tilde{\bm w}_{k,t,\tau} } \mathcal{L}( \tilde{\bm w}_{k,t,\tau}; \xi_{k,t,\tau}),  
	\end{split}
\end{equation}
where $\xi_{k,t,\tau}$ is a batch of sampled data in $\tau$-th step at round $t$. After the local training is finished, the models of participated clients are updated and aggregated to the global model in server as follows: 
\begin{equation}
	\begin{split}
		\label{aggregation2}
		\bm w_{t+1} &= \bm w_{t} -  \frac{1}{ | S_t| } \sum_{k \in S_t}   (\tilde{\bm w}_{k,t,0} - \tilde{\bm w}_{k,t,N}),
	\end{split}
\end{equation}
where $S_t$ is the set of clients selected to be participant in round $t$. 
According to Eq.~(\ref{local SGD update2}), the update synchronized to the server (i.e., $\tilde{\bm w}_{k,t,0} - \tilde{\bm w}_{k,t,N}$), and the model distributed to clients (i.e., $\tilde{\bm w}_{k,t,0}$) are all sparse with a constant sparsity. Therefore, the communication overhead over synchronization could be largely saved. At last, we summarize our proposed FedSpa in Algorithm \ref{algorithm1}.

\begin{algorithm}[!hbtp]
	\caption{FedSpa}
	\label{algorithm1}
	{\color{black}
		\hspace*{\algorithmicindent} \textbf{Input} Training iteration $T$; Learning rate $\eta$;  Local Steps $N$; Random seed $seed$;}
	\begin{algorithmic}[1]
		\Procedure{Server's Main Loop}{}
		\State Randomly initialize global model $\bm w_0$
		\State  $\bm m_{k,0}$ =  MaskInit({\color{black}$seed$})  for $k\!=\!0,1,\dots,K$  
		\For { $t = 0, 1, \dots, T-1$}
		\State Uniformly sample a fraction of client into $S_t$ 	
		\For {each client $k \notin S_t$}
		\State $\bm m_{k,t+1} = \bm m_{k,t}$  \Comment{Inherit masks for round $t+1$ if not chosen }
		\EndFor
		\For { $k \in S_{t}$}
		\State Send $ \tilde{\bm w}_{k,t,0} = \bm m_{k,t} \odot \bm w_t$ to client $k$
		\State Call Client $k$'s main loop and receive  $\bm U_{k,t}$ and $ \bm m_{k,t+1}$
		\EndFor
		\State 	$\bm w_{t+1} = \bm w_{t} -  \frac{1}{S} \sum_{k \in S_t}  \bm U_{k,t}$   \Comment{Average and apply the update}
		\EndFor
		\EndProcedure
		\Procedure{Client's Main Loop}{}
		\For {$\tau = 0, 1, \dots,N-1$}
		\State Sample a batch of data $\xi_{k,t,\tau}$ from local dataset 
		\State $\bm g_ {k,t,\tau}(\tilde{\bm w}_{k,t,\tau}) =  \nabla_{\tilde{\bm w}_{k,t,\tau} } \mathcal{L}( \tilde{\bm w}_{k,t,\tau}; \xi_{k,t,\tau}) $
		\State $ \tilde{\bm w}_{k, t, \tau+1}  = \tilde{\bm w}_{k,t, \tau}- \eta_t \bm m_{k,t} \odot \bm g_ {k,t,\tau}(\tilde{\bm w}_{k,t,\tau}) $ \Comment{Local update with fixed mask $\bm m_{k,t}$}
		\EndFor
		\State $\bm U_{k,t} =  \tilde{\bm w}_{k,t,0} - \tilde{\bm w}_{k,t,N}  $
		\State $\bm m_{k,t+1}$ = Next\_Masks($\cdot$)   \Comment{Plug in a mask-searching solution to produce next masks}
		\State Send back $\bm U_{k,t}$ and $\bm m_{k,t+1}$ 
		\EndProcedure
	\end{algorithmic}
\end{algorithm}

\subsection{Sparse-to-sparse Mask Searching Technique}
The framework of FedSpa is extensible. {\color{black} Specifically, we use the mask surrogate $\bm m_{k,t}$ in Eq. (\ref{local SGD update2}) to perform the update, which allows us to plug in arbitrary mask searching techniques to determine the iterating process of $\bm m_{k,t}$.} In this work, we nominate two kinds of sparse-to-sparse training techniques:
modified dynamic sparse training (DST)  and Random Static Masks (RSM) into FedSpa to search for the optimal local masks.


\textbf{Modified  DST for FedSpa.}
Our modified DST solution (see Algorithm \ref{algorithm3}) for FL follows these procedures. Firstly, randomly initialize the same mask for each client based on \text { Erdós-Rényi Kernel} (ERK) \cite{evci2020rigging}.  Secondly, after local training, each client prunes out a number of unpruned weights with the smallest magnitude, and the number of weights being pruned is determined by a decayed pruned rate. Thirdly, recover the same amount of weights pruned in the last step. We follow the recovery process as in \cite{evci2020rigging} by utilizing the gradient information to do the recovery. By our DST method, the number of sparse weights (aka. \textit{sparse volume}) remains a constant (i.e., $\beta$) throughout the whole training process.

\begin{algorithm}[!hbtp]
	\caption{Modified DST for FedSpa}
	\label{algorithm3}
	{\color{black} \hspace*{\algorithmicindent} \textbf{Input} Initial pruning rate $\alpha_0$; Set of Model layers $\mathcal{J}$;}
	\begin{algorithmic}[1]
		
		\Procedure{MaskInit}{{\color{black}$seed$}}
		\State   Randomly initialize  $\bm m_{k,0}$ using the same random seed $seed$.
		\EndProcedure
		\Procedure{Next\_Masks}{$\tilde{\bm w}_{k,t,N}$}
		\State  Decay $\alpha_{t}$ using cosine annealing with initial pruning rate $\alpha_0$
		\State  Sample a batch of data and backward the dense gradient $g(\tilde{\bm w}_{k,t,N})$
		\For { layer $j \in \mathcal{J}$}
		\State Update mask $\bm m_{k,t+\frac{1}{2}}^{(j)}$ by zeroing out $\alpha_t$-proportion of weights with magnitude pruning
		\State Update mask  $\bm m_{k,t+1}^{(j)}$ via recovering weights with gradient information $g(\tilde{\bm w}_{k,t,N})$
		\EndFor
		\State Return $\bm m_{k,t+1}$
		\EndProcedure
	\end{algorithmic}
\end{algorithm}
\begin{remark}
	We highlight our main modification over traditional DST techniques like Rigl \cite{evci2020rigging} and Set (\cite{mocanu2018scalable}) to an FL context exist in two main aspects: ({\bf i}) The pruning is performed individually by each client based on their local models, and the gradient used for weights recovery is derived using the client's local training data.  ({\bf ii}) Once the next masks are generated,  existing DST solutions immediately apply them to the local model weight. Indicated by \cite{liu2021we}, by doing so, the recovered coordinate may need extra training steps to grow from 0 to a dense value.  Our solution relieves this problem by applying the new mask on the global weights (which are dense), such that the recovered coordinates could have a dense initial value to warm-start.
\end{remark}


		\begin{algorithm}[H]
			\caption{RSM for FedSpa}
			\label{algorithm2}
			\begin{algorithmic}[1]
				\Procedure{MaskInit}{{\color{black}$seed$}}
				\State  Randomly initialize  $\bm m_{k,0}$ using the same random seed {\color{black}$seed$}
				\EndProcedure
				\Procedure{Next\_Masks}{}
				\State $\bm m_{k,t+1} = \bm m_{k,t} $ 
				\State Return $\bm m_{k,t+1}$
				\EndProcedure
			\end{algorithmic}
		\end{algorithm}

\textbf{RSM for FedSpa.} 
RSM (shown in Algorithm \ref{algorithm2}) is basically fixing $\bm m_{k,t}$ for all $k \in [K]$ to the same randomly initialized mask, which remains unchanged during the whole training session.  This solution also ensures the same sparse volume for all the clients throughout the training process, and could also reduce the computation and communication overhead as DST. Interestingly, within the setting of  the homogeneous data distribution, we empirically show that RSM is more effective than DST in FedSpa. 

\subsection{Theoretical Analysis}
In this section, we shall introduce the convergence property of  FedSpa. We first give the following assumptions to enable further analysis.
\begin{assumption} [Bounded gradient dissimilarity between sparse models]
	\label{bounded gradient dissimilarity}
	For any $\tilde{\bm w} \in \mathbb{R}^d$, there exists a constant $G \geq 0$ bounding the  gradient dissimilarity over all clients, i.e., $\left\|  \bm m_k^* \nabla F_{k}(\tilde{\bm {w}}) -  \frac{1}{K}\sum_{k^{\prime}} \bm m_{k^{\prime}}^* \nabla F_{k^{\prime}}(\tilde{\bm {w}}) \right\| \leq G$.
\end{assumption}
\begin{assumption}[Unbiased estimator and bounded variance over sparse masks]
	\label{bounded variance}
	For $\tilde{\bm w} \in \mathbb{R}^d$, assume that $\bm g_{k,t,\tau}(\tilde{\bm w})\!:=\!\nabla \mathcal{L}(\tilde{\bm w}; \xi_{k,t,\tau} )$ is an unbiased estimator of $\nabla F_k(\tilde{\bm w})$. Additionally, for $\bm m_{k,t} \in \{0,1\}^d, k \in [K] , t \in [T], \tau \in [N], \tilde{\bm{w}} \in \mathbb{R}^d$, the variance over sparse masks satisfies: $\mathbb{E}\left[\left\| \bm m_{k,t} \odot \bm g_{k,t,\tau}(\tilde{\bm w})\!-\! \bm m_{k,t} \odot\nabla F_k(\tilde{\bm w})\right\|^{2})\right] \!\leq\! \sigma^{2}$.
\end{assumption}
\begin{assumption}[L-smoothness]
	\label{L-smoothness}
	We assume L-smoothness over the client's loss function, i.e., $\|\nabla F_k (\tilde{\bm {w}}_1)-\nabla F_k (\tilde{\bm {w}}_2)\| \leq L\|\tilde{\bm w}_1- \tilde{\bm w}_2\|$ holds for arbitrary $\tilde {\bm w}_1,\tilde{\bm w_2} \in \mathbb{R}^d$.
\end{assumption}
\begin{assumption}[Bounded gradient]
	\label{bounded gradient}
	Suppose the  gradient of global loss over arbitrary models $\bm w \in \mathbb{R}^d$ is upper-bounded, i.e.,
	$\|\nabla f( \bm w)  \| \leq B.
	$
\end{assumption}
Assumptions \ref{L-smoothness} and \ref{bounded gradient} are commonly used for characterizing the convergence of FL algorithms.  
We modify assumption \ref{bounded gradient dissimilarity} and \ref{bounded variance} slightly from their counterparts in existing FL literature \cite[see their Assumptions 4 and 5]{xu2021fedcm} in order to reveal the variance over sparse masks and the  gradient heterogeneity between the local sparse models. 
For Assumption 2, similar formulation can be found in  \cite[Assumption 2]{mohtashami2021simultaneous}.
\begin{theorem}[{\color{black}Convergence property of FedSpa}] \label{convergence theorem} Given the above assumptions, suppose the learning rate satisfies $\eta_t \leq \sqrt{\frac{1}{16 L^2 N^2 p_t T }}$ and $p_t = \max_{k \in [K], \tau \in [N]} \left \{ \frac{ || \bm m_{k,t} \odot \nabla F_k( \tilde{\bm w}_{k,t,\tau}) -    \bm m_{k}^* \odot \nabla F_k (  \tilde{\bm w}_{k,t})  ||^2}{\|\nabla F_k( \tilde{\bm w}_{k,t,\tau}) -\nabla F_k( \tilde{\bm w}_{k,t}) \|^2} \right\}$, FedSpa exhibits the following {\color{black} convergence property} towards the SPFL problem (P2): 

\begin{equation}
\label{convergence-bound}
	\begin{split}
		\frac{1}{T}\sum_{t=0}^{T-1}	\mathbb{E} ||    \nabla f \left(   {\bm {w}}_{t} \right) ||^2 & \leq \frac{\hat{p} V +\tilde{p} C_1+C_2 }{ \sqrt{T} }      \\
	\end{split}
\end{equation}
where $\hat{p} = \max_{t \in [T]} \sqrt{p_t} $, $\tilde{p}=\frac{1}{T}\sum_{t=0}^{T-1} \frac{1}{\sqrt{p_t}} $, $V= 8L ( f \left(  {{\bm w}}_{0} \right)  -  f \left(  {{\bm w}}^*   \right))$, $ C_1 = \frac{25}{32NT} (\sigma^2+6NG^2)+\frac{75B^2}{16T} + \frac{3G^2}{2}   + \frac{   \sigma^2}{2NS}+\frac{5B^2}{2}   $, and $C_2 = \frac{5}{16N \sqrt{T} }(\sigma^2+6NG^2)+ \frac{15B^2}{8 \sqrt{T}}$.
\end{theorem}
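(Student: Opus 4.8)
The plan is to follow the standard descent-lemma route for local-SGD-type algorithms, but carefully tracking the extra error terms introduced by the mask surrogate $\bm m_{k,t}$ deviating from the optimal mask $\bm m_k^*$. First I would write the per-round change in the global objective using $L$-smoothness (Assumption \ref{L-smoothness}): starting from $f(\bm w_{t+1}) \leq f(\bm w_t) + \langle \nabla f(\bm w_t), \bm w_{t+1}-\bm w_t\rangle + \frac{L}{2}\|\bm w_{t+1}-\bm w_t\|^2$, and substituting the aggregation rule $\bm w_{t+1}-\bm w_t = -\frac{1}{|S_t|}\sum_{k\in S_t}(\tilde{\bm w}_{k,t,0}-\tilde{\bm w}_{k,t,N})$, where the update telescopes into a sum of $N$ masked stochastic gradients from Eq.~(\ref{local SGD update2}). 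I would take expectation over the sampling of $S_t$ (using partial participation to replace the average over $S_t$ by the full average over $[K]$) and over the stochastic batches, invoking Assumption \ref{bounded variance} to split each masked stochastic gradient into its mean $\bm m_{k,t}\odot\nabla F_k$ plus a zero-mean noise whose squared norm is bounded by $\sigma^2$.

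Next I would handle the inner-product (descent) term. The key difficulty is that the aggregated direction uses the surrogate masks $\bm m_{k,t}$ at the local iterates $\tilde{\bm w}_{k,t,\tau}$, whereas $\nabla f(\bm w_t)$ involves the optimal masks $\bm m_k^*$ at the synchronized point $\tilde{\bm w}_{k,t}$. I would insert and subtract $\frac{1}{K}\sum_k \bm m_k^* \odot \nabla F_k(\tilde{\bm w}_{k,t})$ to create a matched descent term $-\eta_t N\|\nabla f(\bm w_t)\|^2$ plus cross terms measuring (a) the mask-plus-drift mismatch and (b) the client drift $\|\tilde{\bm w}_{k,t,\tau}-\tilde{\bm w}_{k,t}\|$. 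This is exactly where the quantity $p_t$ enters: the definition of $p_t$ is precisely the ratio that converts the mask-mismatch error $\|\bm m_{k,t}\odot\nabla F_k(\tilde{\bm w}_{k,t,\tau})-\bm m_k^*\odot\nabla F_k(\tilde{\bm w}_{k,t})\|^2$ into $p_t$ times the pure gradient-drift $\|\nabla F_k(\tilde{\bm w}_{k,t,\tau})-\nabla F_k(\tilde{\bm w}_{k,t})\|^2$, which in turn is controlled by $L$-smoothness and the drift bound. Using Young's inequality to absorb cross terms, the $p_t$ factors propagate into the coefficients $\hat p$ and $\tilde p$ of the final bound.

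I would then bound the client-drift term $\frac{1}{K}\sum_k\sum_\tau \mathbb{E}\|\tilde{\bm w}_{k,t,\tau}-\tilde{\bm w}_{k,t,0}\|^2$ by the usual local-SGD argument: unrolling the $\tau$ local steps, applying $L$-smoothness and Assumptions \ref{bounded gradient dissimilarity} and \ref{bounded variance}, and using the learning-rate condition $\eta_t\leq\sqrt{1/(16L^2N^2 p_t T)}$ to guarantee the drift stays small enough that the recursion contracts (this is where the $\frac{1}{16L^2N^2}$ scaling is designed so that the drift feedback term can be absorbed back into the descent). The bounded-gradient Assumption \ref{bounded gradient} supplies the $B^2$ terms arising from the residual mask-induced bias. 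Finally I would rearrange to isolate $\eta_t N\|\nabla f(\bm w_t)\|^2$, divide through by $\eta_t N$, substitute the prescribed $\eta_t=\Theta(1/\sqrt{T})$, sum over $t=0,\dots,T-1$, telescope $f(\bm w_0)-f(\bm w^*)$ into the constant $V$, and collect the remaining terms into $C_1$ and $C_2$ to obtain Eq.~(\ref{convergence-bound}).

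I expect the main obstacle to be the descent-term bookkeeping in the second step: correctly decomposing the surrogate-mask direction against the optimal-mask gradient so that $p_t$ appears exactly in the claimed ratio form, while ensuring every cross term from Young's inequality is either matched by the $-\|\nabla f\|^2$ descent or absorbed using the carefully-tuned learning rate, rather than leaving an uncontrolled heterogeneity term. Getting the constants in $C_1,C_2$ and the precise $\hat p,\tilde p$ dependence right is the delicate part; the rest is routine local-SGD analysis.
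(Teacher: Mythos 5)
Your proposal follows essentially the same route as the paper's proof: the descent lemma under $L$-smoothness, splitting the inner-product term against $\frac{1}{K}\sum_k \bm m_k^*\odot\nabla F_k(\tilde{\bm w}_{k,t})$ so that $p_t$ appears exactly as the mask-mismatch-to-gradient-drift ratio, a local-SGD drift lemma contracted by the prescribed learning rate, variance/participation splitting for the quadratic term, and the final telescoping into $V$, $C_1$, $C_2$. The plan is sound and matches the paper's argument, including the correct identification that the delicate step is the descent-term bookkeeping where $p_t$ enters.
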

\begin{remark}
  The above result corroborates that the iterates obtained by FedSpa converges to the local minimizer of problem (P2) at the rate of $\mathcal{O}(1/\sqrt{T})$. Another critical observation is that the quality of convergence is closely related to $p_t$. Or in other words, the surrogate masks obtained via the mask searching method could have critical impact on the convergence of FedSpa. Specifically, the constant term in the bound could escalate to infinite if $p_t \to \infty$ or $p_t \to 0$. Both cases indicate that the surrogate masks $m_{k,t}$ are seriously drifted from the local optimal masks $m_k^*$, implying that the mask searching techniques are yielding unsatisfactory performance.
\end{remark}

\section{Experiments}\label{experiments}
In this section, we conduct extensive experiments to verify the efficacy of the proposed FedSpa. 
Our implementation of FedSpa is based on an open-source FL simulator FedML \cite{he2020fedml}.  
\begin{figure*}[!t]
	\centering
	\includegraphics[ width=18cm]{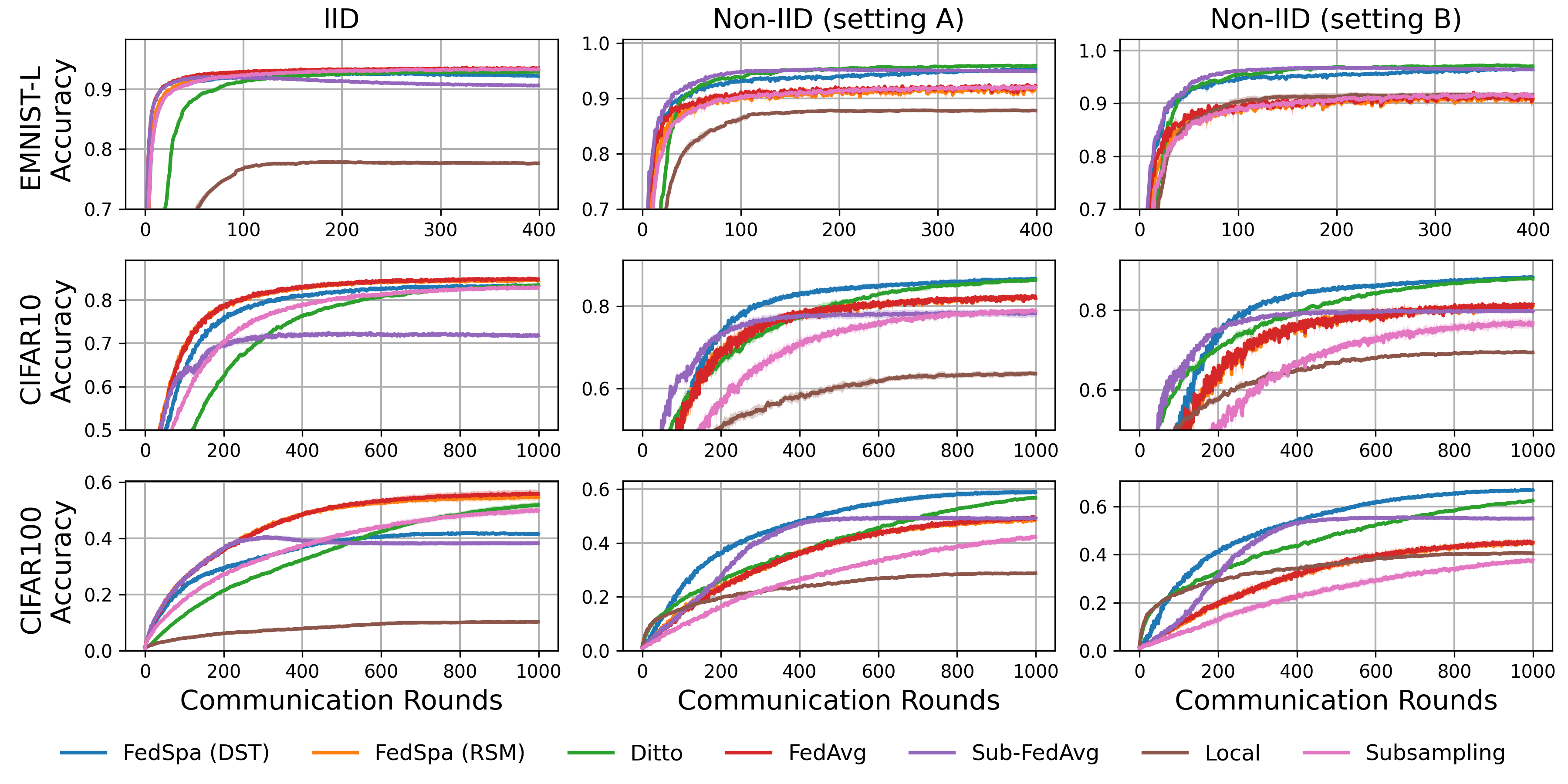}
	\caption{Test Accuracy vs. Communication Rounds  }
	\label{test accuracy main}
\end{figure*}
We fix the dense ratio of FedSpa (DST), FedSpa (RSM), and the final dense ratio of Fed-SubAvg both to 0.5 (i.e., 50\% of parameters are pruned) in our main evaluation. Other hyper-parameters are fixed as default. Figure \ref{test accuracy main} and Table \ref{main table} illustrate the training performance of different algorithms on three datasets. We evaluate the performance based on the following metrics:

\begin{table*}[!hbtp]
	\caption{Table illustrating performance of different methods. }
	\label{main table}
	\resizebox{\textwidth}{!}{%
		\begin{tabular}{ccccc|ccc|ccc}
			\toprule
			\multirow{4}{*}{Task} & \multirow{4}{*}{Method} & \multicolumn{3}{c}{\multirow{2}{*}{IID}} & \multicolumn{6}{c}{Non-IID} \\ \cmidrule{6-11} 
			&  & \multicolumn{3}{c}{} & \multicolumn{3}{c}{Setting A} & \multicolumn{3}{c}{Setting B} \\   \cmidrule(lr){3-5} \cmidrule(lr){6-8} \cmidrule(lr){9-11} 
			&  & Acc & Comm Cost  & FLOPs & Acc & Comm Cost & FLOPs & Acc & Comm Cost & FLOPs
			\\ &  &  &  (GB) & (1e14) &  &  (GB) & (1e16) &  &  (GB) & (1e14) 
			\\ \midrule
			\multicolumn{1}{c}{\multirow{6}{*}{\begin{tabular}[c]{@{}c@{}}EMNIST-L\\ (LeNet)\end{tabular}}} 
			&FedSpa (DST) & 92.2$\pm$0.1 & \textbf{7.0} & 2.0 & 95.3$\pm$0.1 & \textbf{7.0} & 2.0 & 96.5$\pm$0.2 & \textbf{7.0} & 2.0\\ 
			&FedSpa (RSM) & 92.9$\pm$0.1 & \textbf{7.0} & 2.0 & 91.9$\pm$0.2 & \textbf{7.0} & 2.0 & 90.6$\pm$0.9 & \textbf{7.0} & 2.0\\ 
			&Ditto & 92.9$\pm$0.1 & 14.1 & 3.5 & \textbf{95.9}$\pm$0.1 & 14.1 & 3.5 & \textbf{97.0}$\pm$0.2 & 14.1 & 3.5\\ 
			&FedAvg & \textbf{93.5}$\pm$0.2 & 14.1 & 3.5 & 92.3$\pm$0.3 & 14.1 & 3.5 & 90.9$\pm$0.8 & 14.1 & 3.5\\ 
			&Sub-FedAvg & 90.7$\pm$0.2 & 9.5 & \textbf{1.9} & 94.9$\pm$0.2 & 9.4 & \textbf{1.9} & 96.4$\pm$0.2 & 9.4 & \textbf{1.9}\\ 
			&Local & 77.6$\pm$0.3 & - & 3.5 & 87.8$\pm$0.1 & - & 3.5 & 91.6$\pm$0.5 & - & 3.5 \\
			&\color{black}Subsampling & {\color{black}93.3$\pm$0.2} & {\color{black}10.5} & {\color{black}3.5} & {\color{black}92.0$\pm$0.4} & {\color{black}10.5} & {\color{black}3.5} & {\color{black}91.3$\pm$0.6 }& {\color{black}10.5} & {\color{black}3.5}\\ 
			\midrule
			\multirow{6}{*}{\begin{tabular}[c]{@{}l@{}}CIFAR-10\\ (VGG11) \end{tabular}} 
			&FedSpa (DST) & 83.4$\pm$0.1 & \textbf{369.2} & 172.9 & \textbf{86.6}$\pm$0.5 & \textbf{369.2} & 173.3 & \textbf{88.2}$\pm$0.4 & \textbf{369.2} & 173.5\\ 
			&FedSpa (RSM) & 84.5$\pm$0.1 & \textbf{369.2} & 172.9 & 82.1$\pm$0.2 & \textbf{369.2} & 173.3 & 80.9$\pm$0.2 & \textbf{369.2} & 173.5\\ 
			&Ditto & 83.5$\pm$0.2 & 738.5 & 229.3 & 86.4$\pm$0.6 & 738.5 & 229.8 & 87.8$\pm$0.3 & 738.5 & 230.0\\ 
			&FedAvg & \textbf{84.8}$\pm$0.3 & 738.5 & 229.3 & 82.0$\pm$0.4 & 738.5 & 229.8 & 81.4$\pm$0.4 & 738.5 & 230.0\\ 
			&Sub-FedAvg & 71.8$\pm$0.3 & 410.2 & \textbf{121.4} & 78.3$\pm$1.0 & 424.7 & \textbf{120.6} & 79.6$\pm$0.6 & 416.9 & \textbf{119.8}\\ 
			&Local & 42.5$\pm$0.2 & - & 229.3 & 63.6$\pm$0.6 & - & 229.8 & 69.4$\pm$0.2 & - & 230.0\\
			&{\color{black}Subsampling} & {\color{black} 83.0$\pm$0.4} & {\color{black}553.9} & {\color{black}229.3} & {\color{black}78.9$\pm$0.5} & {\color{black}553.9} & {\color{black}229.8} & {\color{black}76.7$\pm$1.0 } & {\color{black}553.9} & {\color{black}230.0}\\ 
			\midrule
			\multirow{6}{*}{\begin{tabular}[c]{@{}l@{}}CIFAR-100\\ (ResNet18)\end{tabular}} 
			&FedSpa (DST) & 41.5$\pm$0.5 & \textbf{448.8} & 705.1 & \textbf{59.0}$\pm$1.0 & \textbf{448.8} & 704.9 & \textbf{66.9}$\pm$0.2 & \textbf{448.8} & 704.8\\ 
			&FedSpa (RSM) & 54.6$\pm$1.1 & \textbf{448.8} & 705.1 & 48.7$\pm$0.5 & 448.8 & 704.9 & 44.6$\pm$0.5 & \textbf{448.8} & 704.8\\ 
			&Ditto & 51.9$\pm$1.1 & 897.6 & 833.2 & 56.8$\pm$0.6 & 897.6 & 833.0 & 62.5$\pm$0.2 & 897.6 & 832.9\\ 
			&FedAvg & \textbf{55.7}$\pm$1.3 & 897.6 & 833.2 & 49.3$\pm$0.4 & 897.6 & 833.0 & 45.0$\pm$0.9 & 897.6 & 832.9\\ 
			&Sub-FedAvg & 38.3$\pm$0.8 & 616.5 & \textbf{494.1} & 49.2$\pm$0.7 & 624.4 & \textbf{508.4} & 55.0$\pm$0.7 & 612.8 & \textbf{496.1}\\
			&Local & 10.3$\pm$0.3 & - & 833.2 & 28.8$\pm$0.1 & - & 833.0 & 40.5$\pm$0.4 & - & 832.9\\
			&  {\color{black} Subsampling} &  {\color{black}49.8$\pm$1.3 }& {\color{black}673.2 }& {\color{black}833.2} & {\color{black}42.3$\pm$0.8} &{\color{black} 673.2} & {\color{black}833.0} & {\color{black}37.6$\pm$1.1} & {\color{black}673.2} & {\color{black}832.9} \\ 
			\hline
		\end{tabular}%
	}
\end{table*}
\begin{table*}[htbp]
	\caption{Communication rounds to a fixed accuracy.   }
	\label{tab:communication to accuracy}
	\resizebox{\textwidth}{!}{%
		\begin{tabular}{llll|lll|lll}
			\midrule
			\multirow{3}{*}{CIFAR10} & \multicolumn{3}{c}{\multirow{2}{*}{IID}} & \multicolumn{6}{c}{Non-IID} \\ \cmidrule{5-10} 
			& \multicolumn{3}{c}{} & \multicolumn{3}{c}{Setting A} & \multicolumn{3}{c}{Setting B} \\ \cmidrule(lr){2-4} \cmidrule(lr){5-7} \cmidrule(lr){8-10} 
			& Acc@70 & Acc@75 & Acc@80 & Acc@70 & Acc@75 & Acc@80 & Acc@70 & Acc@75 & Acc@80 \\ \midrule
			FedSpa (DST) & 134.0$\pm$2.9 & 183.3$\pm$6.8 & 312.3$\pm$16.2 & 167.3$\pm$4.0 & \textbf{210.3}$\pm$4.2 & \textbf{281.3}$\pm$19.1 & 164.3$\pm$5.0 & 206.3$\pm$4.1 & \textbf{270.0}$\pm$5.1\\ 
			FedSpa (RSM) & \textbf{101.3}$\pm$1.7 & 141.3$\pm$6.2 & 237.0$\pm$6.4 & 195.3$\pm$10.7 & 271.3$\pm$16.2 & 471.7$\pm$19.8 & 252.0$\pm$12.7 & 339.0$\pm$20.6 & 614.0$\pm$72.8\\ 
			Ditto & 284.7$\pm$8.1 & 370.3$\pm$9.3 & 549.3$\pm$22.6 & 242.3$\pm$12.7 & 334.0$\pm$16.5 & 466.3$\pm$30.3 & 190.3$\pm$6.1 & 278.0$\pm$22.0 & 417.7$\pm$10.2\\ 
			FedAvg & 105.0$\pm$2.2 & \textbf{140.3}$\pm$4.7 & \textbf{228.7}$\pm$23.5 & 198.3$\pm$14.8 & 256.7$\pm$10.8 & 474.7$\pm$31.4 & 241.0$\pm$3.7 & 327.3$\pm$8.5 & 583.7$\pm$65.5\\ 
			Sub-FedAvg & 197.7$\pm$22.9 & $>1000$ & $>1000$ & \textbf{151.7}$\pm$10.6 & 235.0$\pm$17.1 & $>1000$ & \textbf{137.3}$\pm$1.7 & \textbf{191.7}$\pm$6.6 & $>1000$\\
			{\color{black} Subsampling} & {\color{black}198.0$\pm$4.5 }& {\color{black}268.3$\pm$4.5 }& {\color{black}457.0$\pm$13.5 }& {\color{black}365.3$\pm$23.8 }& {\color{black}523.0$\pm$43.4 }& {\color{black}$>1000$ }& {\color{black}466.3$\pm$15.0 }& {\color{black}722.7$\pm$105.1 }& {\color{black}$>1000$}\\ 
			\midrule
			CIFAR100 
			& Acc@40 & Acc@50 & Acc@55 & Acc@40 & Acc@50 & Acc@55 & Acc@40 & Acc@50 & Acc@55 \\ 
			\midrule
			FedSpa (DST) & 536.3$\pm$35.9 & $>1000$ & $>1000$ & \textbf{236.3}$\pm$12.3 & \textbf{442.0}$\pm$16.9 & \textbf{595.0}$\pm$41.3 & \textbf{181.3}$\pm$7.8 & \textbf{314.7}$\pm$17.4 & \textbf{407.7}$\pm$16.7\\ 
			FedSpa (RSM) & \textbf{239.3}$\pm$4.1 & \textbf{435.7}$\pm$25.8 & $>1000$ & 460.7$\pm$12.5 & $>1000$ & $>1000$ & 594.0$\pm$10.7 & $>1000$ & $>1000$\\ 
			Ditto & 545.7$\pm$19.4 & 868.7$\pm$56.1 & $>1000$ & 455.3$\pm$6.9 & 724.0$\pm$20.2 & 894.0$\pm$25.0 & 301.3$\pm$10.8 & 534.0$\pm$11.3 & 678.7$\pm$7.9\\ 
			FedAvg & 245.0$\pm$5.1 & 436.3$\pm$25.3 & $>1000$ & 470.7$\pm$25.4 & $>1000$ & $>1000$ & 589.7$\pm$46.7 & $>1000$ & $>1000$\\ 
			Sub-FedAVG & $>1000$ & $>1000$ & $>1000$ & 280.7$\pm$2.5 & $>1000$ & $>1000$ & 246.3$\pm$10.1 & 335.3$\pm$14.1 & 511.0$\pm$85.9\\ 
			{\color{black}Subsampling} & {\color{black}460.7$\pm$16.6} & {\color{black}$>1000$} & {\color{black}$>1000$} & {\color{black}845.3$\pm$59.2} & {\color{black}$>1000$} & {\color{black}$>1000$ }& {\color{black}$>1000$ }& {\color{black}$>1000$ }& {\color{black}$>1000$}\\ 
			\hline
		\end{tabular}%
	}
\end{table*}
\subsection{Experimental Setup}
\label{setup}
\textbf{Dataset.} We evaluate the efficacy of FedSpa on EMNIST-Letter (EMNIST-L henceforth), CIFAR10, and CIFAR100 datasets. We simulate the client's data distribution on Non-IID and IID setting. We simulate two groups of Non-IID settings via $\gamma$-Dirichlet distribution, named setting A and setting B. Setting A and setting B respectively specify $\gamma=0.2,0.1$ for both EMNIST-L and CIFAR100, while specify  $\gamma=0.5,0.3$ for CIFAR10.  Details of our simulation setting are available in Appendix \ref{Data Splitting Setting}.

\textbf{Baselines.} We compare our proposed FedSpa with four baselines, including  FedAvg \cite{mcmahan2016communication}, Sub-FedAvg \cite{vahidian2021personalized}, Ditto \cite{li2021ditto} and Local. We tune the hyper-parameters of the baselines to their best states.  Specifically, the regularization factor of Ditto is set to 0.5. The prune rate each round, distance threshold, and accuracy threshold of Fed-Subavg are fixed to 0.05, 0.0001, 0.5, respectively. We ran 3 random seeds in our comparison.

\textbf{Models and hyper-parameters.} We use LeNet5 for EMNIST-L, VGG11 for CIFAR10, and ResNet18 for CIFAR100 in our experiment. We use a SGD optimizer with weight decayed parameter 0.0005. The learning rate is initialized with 0.1 and decayed with 0.998 after each communication round. 
We simulate 100 clients in total, and in each round 10 of them are picked to perform local training (the setting follows \cite{mcmahan2016communication}).   
For all the methods except Ditto, local epochs are fixed to 5. For Ditto, in order to ensure a fair comparison, each client uses 3 epochs for training of the local model, and 2 epochs for global model training. The batch size of all the experiments is fixed to 128. 
For FedSpa, the pruning rate (i.e., $\alpha_{t}$) is decayed using cosine annealing with an initial pruned rate 0.5. The initial sparsity of layers is initialized by ERK with scale parameter 1. 

\subsection{Main Performance Evaluation}

\textbf{Final Accuracy.} 
In the Non-IID setting, we show that FedSpa (DST) achieves remarkable performance. Specifically, in Non-IID setting B of CIFAR100, FedSpa (DST)  achieves respectively 4.4\%, 11.9\% and 21.9\% higher final model accuracy, compared with Ditto, Sub-FedAvg and FedAvg. FedSpa (DST) seems to achieve better performance as the FL tasks becoming difficult (since better performance is observed in a higher Non-IID extent, and in datasets that are intrinsically more difficult). Interestingly, in the IID setting, we show that all the personalized solutions exhibit some extents of performance degradation, which become more significant as the dataset becomes challenging. The compression-based methods seem to be especially vulnerable in this setting. Our interpretation for this phenomenon is that: since the information exchange between clients would be limited by employing different sub-networks for training, the clients could not efficiently make an effective fusion on their models through parameter averaging. This hypothesis is substantiated by our experiment on  FedSpa (RSM), an alternative implementation of FedSpa, which forces all the masks to maintain the same sub-network.  FedSpa (RSM) achieves commensurate performance with FedAvg in the IID setting, outperforming the personalized solutions.

\textbf{Convergence.} As shown in Table \ref{tab:communication to accuracy}, FedSpa achieves significantly faster convergence, which potentially saves the communication rounds to train a model from scratch to a specific accuracy.

\textbf{Training FLOPs and Communication.}  
From Table \ref{main table}, FedSpa (DST) achieves 15.4\%$\sim$42.9\% lower  FLOPs than the dense solutions (e.g., Ditto, FedAvg), 13.0\%$\sim$28.2\% lower communication overhead than another model compression solution Sub-FedAvg, and  50\% lower communication than the dense solution. The edge of FedSpa (DST) stems from its training pattern -- it is trained from a sparse model, with constant sparsity throughout the training process. However, it is interesting to see that the training FLOPS of Sub-FedAvg is considerably lower than FedSpa, even under the same sparsity setting. This phenomenon stems from our ERK initialization, which is essential for the high performance of our solution, for which we will have a further discussion in our ablation study.


\subsection{Ablation Study}
\label{ablation} 
In this sub-section, we give and discuss the experimental results of the ablation study of FedSpa. Specifically, we study the impacts of dense ratio, different mask initialization methods, and the gradient-involved weight recovery procedure. Additionally, we present an interesting observation on the performance of the global model trained by our personalized solution. Our ablation study is done with ResNet-18 on CIFAR100. 

\begin{figure*}[!hbtp]
	\centering
	\includegraphics[width=7in]{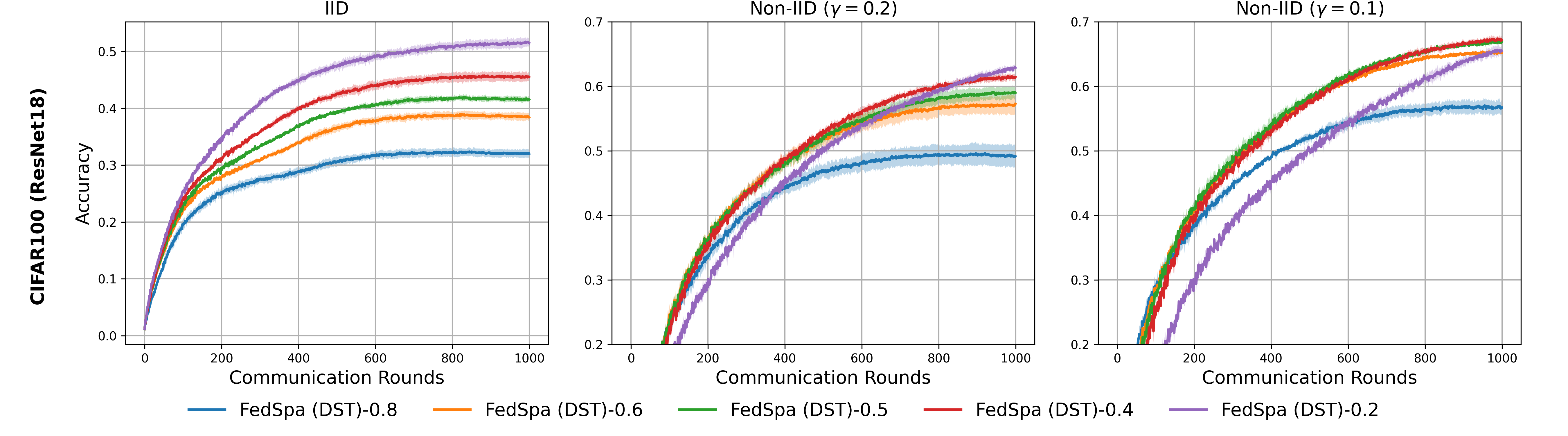}
	\caption{ FedSpa (DST) under different sparsity. Numbers in the labels are sparsity. }
	\label{impact of dense ratio}
\end{figure*}
\begin{table*}[htbp]
	\centering
	\caption{Performance of FedSpa (DST) under different sparsity settings.}
	\resizebox{\textwidth}{!}{%
		\begin{tabular}{cccccccccc} 
			\hline
			\multirow{3}{*}{Sparsity} & \multicolumn{3}{c}{\multirow{2}{*}{iid}} & \multicolumn{6}{c}{Non-iid} \\ 
			\cline{5-10}
			& \multicolumn{3}{c}{} & \multicolumn{3}{c}{$\gamma$=0.2} & \multicolumn{3}{c}{$\gamma$=0.1} \\ 
			\cline{2-10}
			& Acc & Comm Cost & FLOPs & Acc & Comm Cost & FLOPs & Acc & Comm Cost& FLOPs \\ 
			&  &  (GB) & (1e16) &  & (GB) & (1e16) &  & (GB) & (1e16) \\ 
			\hline
			0.2 & \textbf{51.5}$\pm$0.8 & 718.1 & 8.2 & \textbf{62.9}$\pm$0.4 & 718.1 & 8.2 & 65.5$\pm$0.5 & 718.1 & 8.2\\ 
			0.4 & 45.5$\pm$0.9 & 538.6 & 7.6 & 61.4$\pm$0.6 & 538.6 & 7.5 & \textbf{67.2}$\pm$0.4 & 538.6 & 7.5\\ 
			0.5 & 41.5$\pm$0.5 & 448.8 & 7.1 & 59.0$\pm$1.0 & 448.8 & 7.0 & 66.9$\pm$0.2 & 448.8 & 7.0\\ 
			0.6 & 38.4$\pm$0.6 & 359.0 & 6.5 & 57.3$\pm$1.5 & 359.0 & 6.5 & 65.2$\pm$0.2 & 359.0 & 6.5\\ 
			0.8 & 32.0$\pm$0.7 & \textbf{179.5} & \textbf{4.6} & 49.2$\pm$1.8 & \textbf{179.5} & \textbf{4.6} & 56.7$\pm$0.8 & \textbf{179.5} & \textbf{4.6}\\ 
			\hline
			\label{table: sparsity setting}
	\end{tabular}}
\end{table*}
\textbf{Impact of sparsity (aka. sparse ratio).}  
Fixing other components and hyper-parameters to the default value in our setup, we change the sparsity of FedSpa to 0.2, 0.4, 0.5, 0.6 and 0.8, to show its impact on the algorithm performance. Experimental results are available in Figure \ref{impact of dense ratio} and Table \ref{table: sparsity setting}. By our report, we observe that sparsity may impact learning performance under different data distribution settings. For the IID setting, a higher sparsity seems to seriously degrade the training performance, while for the Non-IID setting, properly sparsifying the model may even enhance the final accuracy, and with a higher Non-IID extent, the benefit of sparsification reinforces.  But too much sparsity, even in the highly non-iid setting (e.g. $\gamma=0.1$) leads to performance degradation.  On the contrary, while it is iid, the convergence could be dominated by the errors brought by sparsification, and setting the mask to a higher sparsity could possibly enlarge these existing errors. 

An intuitive interpretation for the impact of sparse ratio is from the perspective of information exchange. Too much sparsification may limit the information exchange between the local sparse models.  If all the clients maintain an extremely high sparsity, the intersected coordinates between clients' local sparse models (or identically, their masks) would be small. Then the local update averaging process (see Eq. (\ref{aggregation2})), the only way to extract global knowledge into the local models, would not be effective. On contrary, while the sparsity is set to an extremely low value, the personalized features of local models could be eliminated, since only limited coordinates in their models are different.   \par

\par

\textbf{ERK vs. Uniform sparsity initialization.} Recall our mask initialization procedure in Algorithm \ref{algorithm2} that the layer-wise sparsity is initialized by ERK. This in essence ensures that the layer-wise sparsity of a model is scaled with the number of parameters in a layer. \cite{liu2021we} confirms the outstanding effect of ERK initialization in improving overall training performance over the centralized training primitive, but it remains unexplored how it performs in our proposed distributed training framework. Below, we show in Figure \ref{Initialization ablation1} how accuracy evolves with communication rounds under ERK and Uniform \footnote{Uniform enforces the same sparsity for all the layers in a model.} initialization. As shown, a drastic drop of accuracy is observed by replacing ERK with Uniform, by which we conclude that ERK is an essential component for FedSpa (DST).

\begin{figure*}[!hbtp]
	\centering
	\includegraphics[width=7in]{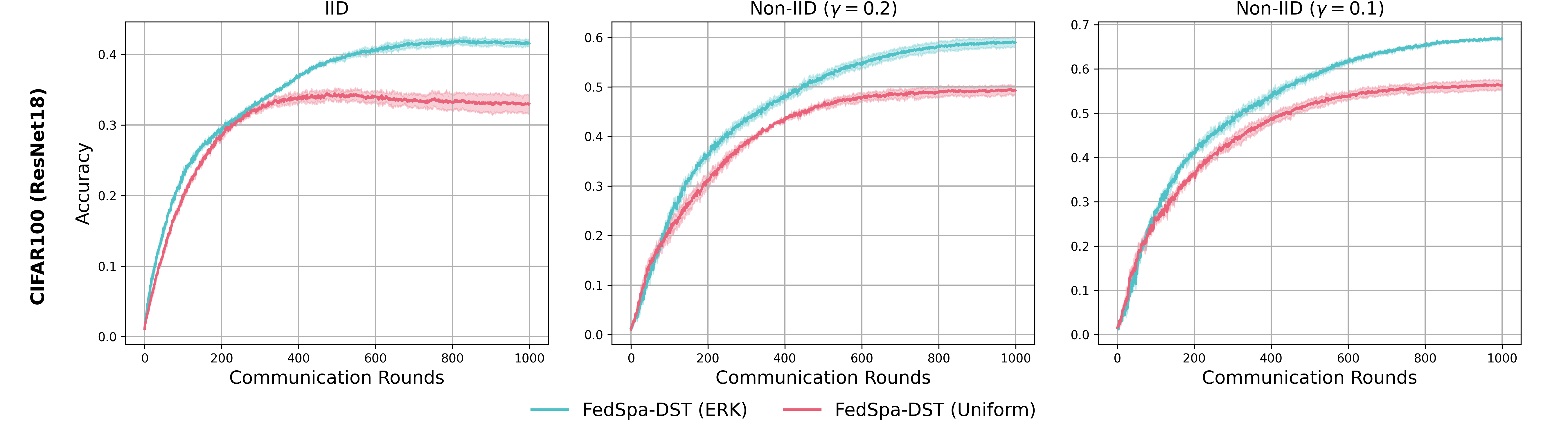}
	\caption{Layer-wise sparsity initialized by ERK or Uniform. Sparsity of FedSpa is fixed to 0.5.  }
	\label{Initialization ablation1}
\end{figure*}
However, though a significant accuracy enhancement is observed, we note that integrating ERK may sacrifice potentially more FLOPS reduction. This observation can be found in Table \ref{Initialization ablation2}, wherein our results show that initialization with Uniform can save 34.3\% FLOPs of that with ERK. 

\begin{table*}[!hbtp]
	\centering
	\caption{ Performance of FedSpa  under ERK and Uniform initialization. }
	\resizebox{\textwidth}{!}{%
		\begin{tabular}{cccccccccc} 
			\hline
			\multirow{3}{*}{Methods} & \multicolumn{3}{c}{\multirow{2}{*}{iid}} & \multicolumn{6}{c}{Non-iid} \\ 
			\cline{5-10}
			& \multicolumn{3}{c}{} & \multicolumn{3}{c}{$\gamma$=0.2} & \multicolumn{3}{c}{$\gamma$=0.1} \\ 
			\cline{2-10}
			& Acc & Comm Cost & FLOPs & Acc & Comm Cost & FLOPs & Acc & Comm Cost& FLOPs \\ 
			&  &  (GB) & (1e16) &  & (GB) & (1e16) &  & (GB) & (1e16) \\ 
			\hline
			ERK & \textbf{41.5}$\pm$0.5 & 448.8 & 7.1 & \textbf{59.0}$\pm$1.0 & 448.8 & 7.0 & \textbf{66.9}$\pm$0.2 & 448.8 & 7.0\\ 
			Uniform & 33.0$\pm$1.4 & 448.8 & \textbf{4.6} & 49.3$\pm$0.9 & 448.8 & \textbf{4.6} & 56.3$\pm$1.1 & 448.8 & \textbf{4.6}\\ 
			\hline
	\end{tabular}}
\end{table*}
\par
\textbf{Different or same mask initialization.} 
Recall that based on the layer-wise sparsity calculated by ERK, FedSpa uses the same random seed to initialize the mask, so as to make the mask exploration of all clients started from the same mask. In the following, we give another implementation that allows each client to share different masks in the beginning. \par
As shown in Figure \ref{Initialization ablation2}, we surprisingly find that for FedSpa (DST), maintaining different masks in initialization may slightly enhance its training performance in IID and Non-IID ($\gamma=0.2$) setting. We hypothesize that by different mask initialization, each client could more efficiently search for their optimal masks that better represents the features and labels of the personal data.  
\par
\begin{figure*}[!hbtp]
	\centering
	\includegraphics[width=7in]{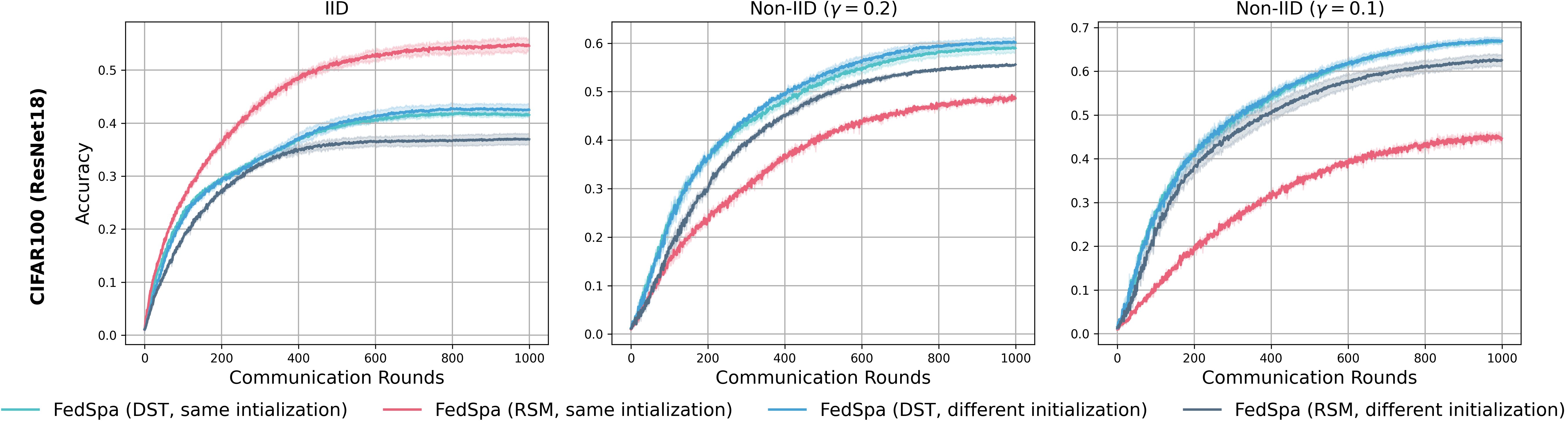}
	\caption{Initialization based on same or different masks. Sparsity of FedSpa is fixed to 0.5.  }
	\label{Initialization ablation2}
\end{figure*}
For FedSpa (RSM), compared with initialization using the same mask, different mask initialization may result in a drastic performance loss in the IID setting, and a significant improvement in the Non-IID setting. With the same mask initialization of RSM,  each client consistently trains based on the same sub-network, which completely eliminates personalization. So this setting shares a similar performance with FedAvg -- with satisfactory performance in IID setting and rather weak performance in Non-IID setting. On contrary, by initializing different masks in the beginning, FedSpa (RSM) reserves some degrees of personalization, since only the intersected coordinates in their local models are shared and updated by the information exchange (i.e., average) process. Consequently. FedSpa (RSM) with different mask initializations has a similar performance pattern with FedSpa (DST).  \par

Another interesting observation is that FedSpa (RSM) with different mask initialization cannot outperform FedSpa (DST) in both the two groups of Non-IID settings. This indicates that the DST mask searching process is effective to achieve a superior performance of FedSpa in Non-IID setting. 

\begin{figure*}[!hbtp]
	\centering
	\includegraphics[width=7in]{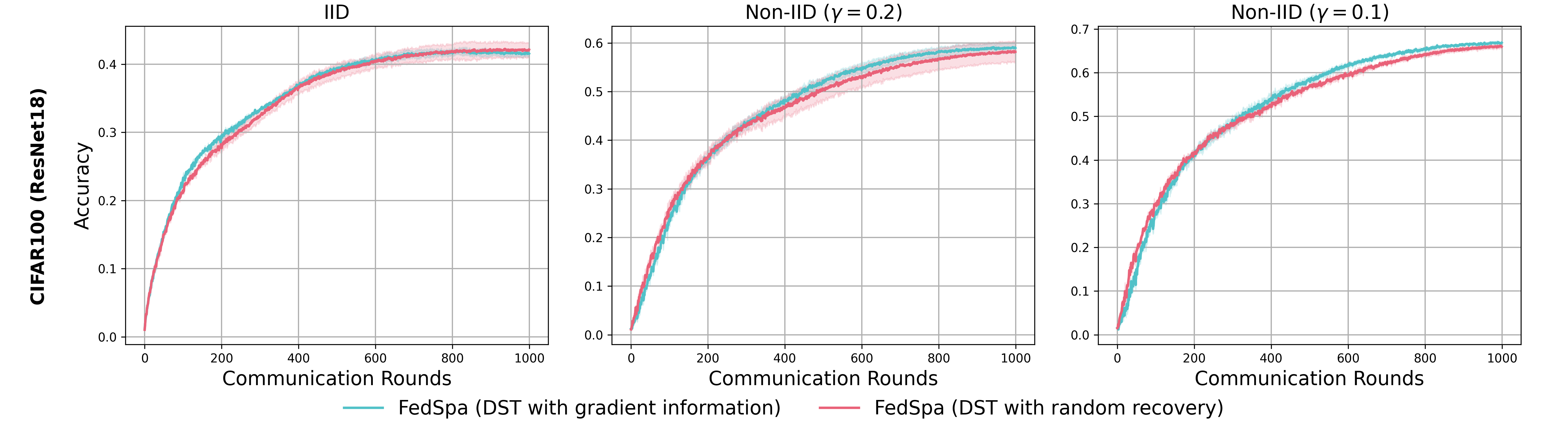}
	\caption{Recovery with gradient information or random recovery. Sparsity is fixed to 0.5.  }
	\label{Rigl ablation}
\end{figure*}
\textbf{Weight recovery w/ or w/o gradient information.} 
Recall that in FedSpa (DST), we proposed to use gradient information to recover the pruned weights, which is empirically proven in \cite{evci2020rigging} to outperform its random recovery counterpart in Set \cite{mocanu2018scalable}. Specifically, for gradient information-based recovery, the weight coordinates with the top-$\alpha_t$ magnitude of the gradient would be recovered, while for random recovery, the coordinates are recovered randomly. To demonstrate the impact of the weight recovery method over FedSpa (DST), in Figure \ref{Rigl ablation}, we compare the gradient information-based recovery with random recovery.  Our experimental result demonstrates that recovery with gradient information could slightly accelerate the convergence and enhance the final accuracy in our FedSpa framework. \par

\begin{figure*}[!hbtp]
	\centering
	\includegraphics[width=7in]{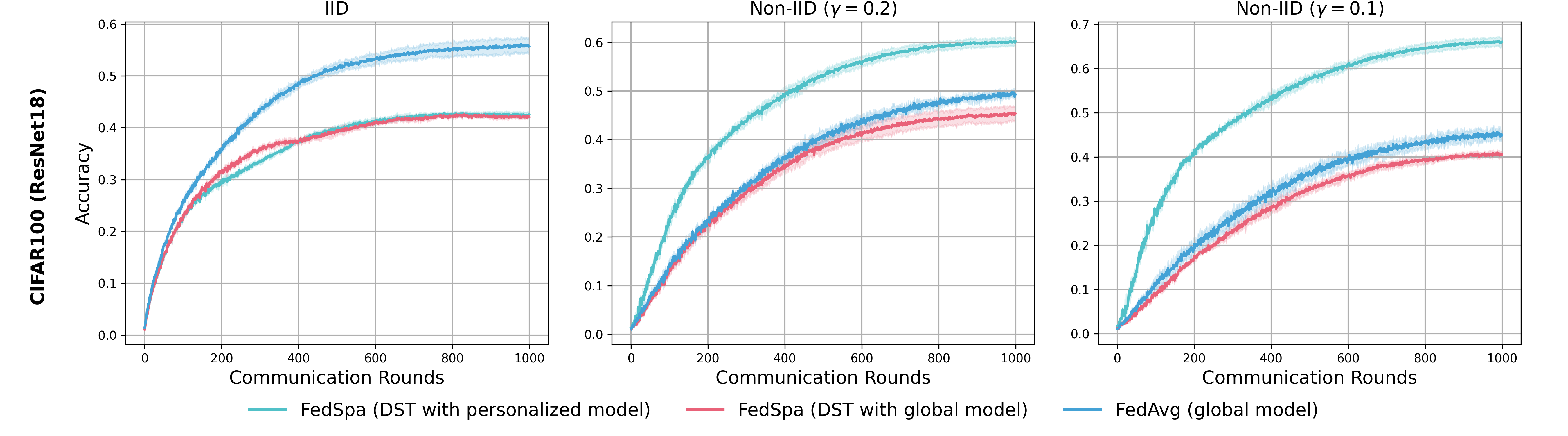}
	\caption{Global model vs. Personalized models. Sparsity of FedSpa is fixed to 0.5.  }
	\label{global ablation}
\end{figure*}
\textbf{Global model vs. Personalized model.} 
In our main experimental result, all the testings are conducted by clients based on their own personalized models. But it is interesting to evaluate whether the global model trained by FedSpa itself could converge, or even could achieve commensurate performance with the global model trained by general FL solution (e.g. FedAvg). As demonstrated by Figure \ref{global ablation}, we empirically find that in the IID setting, the global model trained by FedSpa cannot recover the performance of that trained by FedAvg, and a considerable performance drop is also observed in the Non-IID setting.  Another observation is that the global model of FedSpa surprisingly maintains roughly the same performance as its personalized models in the IID setting, but conceivably suffers significant performance loss in the Non-IID setting. This corroborates our conclusion that sub-networks extracted from a global model may potentially outperform the full model, under the condition that the data distributions of clients are skewed (or heterogeneous).

{\color{black}
	\textbf{Wall time.} Recall that we do an additional mask searching procedure in FedSpa (DST), which might possibly induce extra wall time on the local devices. We show in Table \ref{wall time} the wall time used for training and mask searching in one single local round. The sparsity used for this experiment is fixed to 0.5, while other parameters remain the default setting (see Section \ref{setup}). We use one single 1080Ti to perform training for the GPU-based experiment, while the CPU-based experiment is conducted on an Intel(R) Xeon(R) CPU E5-2620 v4 @ 2.10GHz with 8 cores. Our experimental results confirm that the mask searching process only accounts for a small portion of wall time (approximately $5\%-20\%$) for the entire computation time on the local devices. 
	
	\begin{table*}
		\centering
		\caption{\color{black}Wall time of FedSpa (DST) for local training and mask searching.}
		\begin{tabular}{cccc}
			\midrule 
			\multirow{2}{*} {Task} & Wall Time & Wall Time  & Ratio \\ & (Train) & (Mask Search) &(Mask Search/Train)    \\
			\midrule 
			EMNIST-LeNet (CPU) & 1.03$\pm$0.04s & 0.09$\pm$0.0s & 8.92\%$\pm$0.6\\ 
			CIFAR10-VGG11 (CPU) & 11.4$\pm$0.25s & 2.19$\pm$0.11s & 19.19\%$\pm$1.09\\ 
			CIFAR100-Resnet18 (CPU) & 28.61$\pm$0.38s & 3.7$\pm$0.28s & 12.93\%$\pm$1.08\\ 
			EMNIST-LeNet5 (GPU) & 0.39$\pm$0.01s & 0.02$\pm$0.0s & 5.66\%$\pm$0.25\\ 
			CIFAR10-VGG11 (GPU) & 1.56$\pm$0.03s & 0.22$\pm$0.01s & 14.3\%$\pm$0.48\\ 
			CIFAR100-Resnet18 (GPU) & 2.71$\pm$0.01s & 0.33$\pm$0.01s & 12.06\%$\pm$0.2\\
			\hline
			\label{wall time}
		\end{tabular}
	\end{table*}
}


\section{Conclusions}
In this paper, we propose FedSpa, a personalized FL solution that enables sparse-to-sparse training and efficient sub-model aggregation. As demonstrated by our experiments, FedSpa exhibits outstanding performance in the Non-IID setting, outperforming other existing solutions in terms of accuracy, convergence speed as well as communication overhead. Additionally, we present theoretical analysis to evaluate the convergence bound of FedSpa towards the proposed SPFL problem. Future direction includes designing new model aggregation solutions for the sparse sub-network, and new mask searching techniques specifically targeting on federated learning process.



%

\ifCLASSOPTIONcompsoc
  \section*{Acknowledgments}
\else
  \section*{Acknowledgment}
\fi

This work is supported by Science and Technology Innovation 2030 –“Brain Science and Brain-like Research” Major Project (No. 2021ZD0201402 and No. 2021ZD0201405).

\ifCLASSOPTIONcaptionsoff
  \newpage
\fi




\bibliographystyle{IEEEtran}
\bibliography{bib}

\appendices
\onecolumn
\section{Introduction to main components of FedSpa}
\textbf{ERK initialization.} In Algorithm \ref{algorithm3}, we use Erdós-Rényi Kernel (ERK) originally proposed by \cite{evci2020rigging} to initialize the sparsity of each layer. Specifically, the active parameters of the convolutional layer initialized by ERK is proportional to $1  \frac{n^{1-1}+n^{l}+w^{l}+h^{l}}{n^{l-1} * n^{l} * w^{l} * h^{l}}$, where $n^{l-1}$, $n_l$ $w^{l}$ and $h^{l}$ respectively specify number of input channels, output channels and kernel's width and height in the $l$-th layer. For the linear layer, the number of active parameters scale with $1 \frac{\mathrm{n}^{l-1}+\mathrm{n}^{l}}{\mathrm{n}^{l-1} * \mathrm{n}^{l}} $ where ${n}^{l-1}$ and $\mathrm{n}^{l}$ are the number of neurons in the $l-1$-th and $l$-th layer. This initialization basically allows the layer with less parameters have more proportion of active parameters.
\par
\textbf{Cosine annealing.} Recall that we set the initial pruning rate as $\alpha_0$ and gradually decay it to 0 with cosine annealing \cite{liu2021we}. The update of pruning rate with cosine annealing can be formalized as:  $\alpha_t= 0.5 \times \alpha_0 \times \left(1+\cos \left(\frac{\text { t }}{T-1} \pi\right)\right)$. We perform this decay in order to ensure that the network (specifically, its active coordinates) would not experience drastic change on the later stage of training while ensuring that the mask searching is effective on the early stage of training.

\par

\section{Detailed Experimental Setting}
\label{data splitting}

\subsection{Data Splitting Setting}
\label{Data Splitting Setting}
In our implementation, we first split the training data (60k pieces of data for CIFAR10 and CIFAR100, and 145.6k for EMNIST-L, respectively) to clients for IID setting and Non-IID setting. For the IID setting, data are uniformly sampled for each client. For the Non-IID setting,  we use $\gamma$-Dirichlet distribution on the label ratios to ensure uneven label distributions among devices as \cite{hsu2019measuring}. The lower the distribution parameter $\gamma$ is, the more uneven the label distribution will be, and would be more challenging for FL. After the initial splitting of training data, we sample 100 pieces of testing data from the testing set to each client. To simulate the personalized setting, each client's testing data has the same proportion of labels as its training data. Testing of the personalized model is performed by each client based on their personalized data, and the overall testing accuracy is calculated as the average of all the client's testing accuracy. In our experiment, we simulate different Non-IID settings. For CIFAR10, Non-IID setting A and B respectively specify $\gamma = 0.5$ and $\gamma=0.3$. For EMNIST-L and CIFAR100, since the number of the total labels are bigger\footnote{26 and 100 labels respectively in EMNIST-L and CIFAR100, while only 10 labels in CIFAR10.}, we use smaller $\gamma$, wherein setting A and B respectively specify $\gamma=0.2$ and $\gamma=0.1$. 

\subsection{Network Architectures}
\label{Network Architectures}
We follow the Caffe's implementation of LeNet5 \footnote{Available in https://github.com/mi-lad/snip/blob/master/train.py} \cite{lecun2015lenet}, VGG11  \cite{simonyan2014very} and ResNet18 \cite{he2016deep} to do the evaluation. Suggested by \cite{hsieh2020non}, DNNs with batch normalization layers \cite{ioffe2015batch} are particularly vulnerable to the Non-IID setting, suffering significant model quality loss in the FL process. Following the recommendation from \cite{hsieh2020non}, we use group normalization \cite{wu2018group} to substitute the original batch normalization layer in both ResNet18 and VGG11.

\subsection{Baseline Description}
\label{baseline description}
Below, we give a brief introduction of the baselines compared in our evaluations:
\begin{itemize}[leftmargin=*]
	\item \textbf{FedAvg} \cite{mcmahan2016communication} is the vanilla solution of FL. It utilizes weights average to enable all the clients to collaboratively train a global model, which efficiently absorbs knowledge from personal data resided in clients. 
	
	\item \textbf{Ditto} \cite{li2021ditto} is a personalized FL solution aiming to smooth the tension brought by the data heterogeneity problem of FL. Ditto achieves personalization via maintaining both the local models and global model. Specifically, within each round of iteration, each client first trains the global model based on its local empirical loss (which shares the same procedure as FedAvg). After the global model is updated,  each client additionally trains its local model based on a loss function involving its local empirical loss and the proximal term towards the global model. This local training phase is used to extract the global knowledge into each client's local model. Since each client has to maintain and train both local model and global model, Ditto might need extra computation and storage overhead to achieve its personalization. 
	
	\item \textbf{Local} is the direct solution to the ultimate PFL problem (P2). Each client performs SGD based on its local data, and there is no communication between clients. To mimic the FL setting, we sample 10 out of 100 clients to do the local update on its local model after every 5 epochs of training (same with the number of local epochs in a communication round that is performed by other solutions). For sake of consistency, we still use 1 communication round to represent 5 local epochs of Local in our evaluation.
	
	\item \textbf{Sub-FedAvg} \cite{vahidian2021personalized} is a prominent model compression-based PFL. Sub-FedAvg maintains personalized sub-networks for each client. Training of Sub-FedAvg starts from a fully dense model, and this solution iteratively prunes out the parameters and channels as the training progresses. Finally, the commonly shared parameters of each layer are removed, and only the personalized parameters that can represent the features of local data are kept.
	\item { \color{black} 
		\textbf{Subsampling} \cite{konevcny2016federated}  is a gradient-compression solution aiming to reduce the communication overhead of FL. The local training procedure is the same with FedAvg. The difference is that Subsampling does not communicate the intact model for aggregation, but only communicates the sparse gradient update to the server for aggregation. Explicitly, in each round, the sparse gradient update is produced through element-wisely multiplying a random mask. Different from FedSpa, the randomized mask is independently generated in each round, and would only be used to compress the gradient when uploading the gradient update (which means in the model distribution phase, the model distributed would not be sparsified, and therefore would not save the downlink communication cost). 
	}
\end{itemize}

\section{Missing Proof of Theorem \ref{convergence theorem}}
\label{proof}
In this section, for sake of readability, we first clarify the notations and reiterate several facts that we use in our proof. Then we present several lemmas that are commonly used in the FL literature (see \cite{karimireddy2020scaffold, xu2021fedcm}). Later, several key lemmas are listed with exhaustive proof, and finally, the proof of our main theorem is given by exploiting the listed lemmas and facts. 
\subsection{Notations and Facts}
Throughout the proof, we assume $\sum_{\tau}$ equivalent to  $\sum_{\tau=0}^{N-1}$,  $\sum_{k}$ equivalent to $\sum_{k=1}^{K}$,  and $\sum_{k,\tau}$ equivalent to $\sum_{k=1}^K \sum_{\tau=0}^{N-1}$ unless otherwise specified. In our proof, we reuse most of the notations from our problem formulation part.  We use $\bm g_{k,t,\tau} (\tilde{\bm w}_{k,t,\tau} ) =  \nabla_{\tilde{\bm w}_{k,t,\tau} } \mathcal{L}( \tilde{\bm w}_{k,t,\tau}; \xi_{k,t,\tau})  $ to denote the stochastic gradient of client $k$ in round $t$ and at step $\tau$. 
\par
Then as per our formulation in Section  \ref{sgd solution}, we reiterate the following facts, which would be heavily used in our proof. 
\begin{fact}[Local step]
	\label{local step fact}
	As per Eq. (\ref{local SGD update2}), one local step of client's update can be formalized as follows:
	\begin{equation}
		\begin{split}
			\tilde{\bm w}_{k,t,\tau+1} = \tilde{\bm w}_{k,t,\tau} - \eta_t \bm m_{k,t} \odot \bm g_{k,t,\tau} (\tilde{\bm w}_{k,t,\tau} ),
		\end{split}
	\end{equation}
	where $\bm g_{k,t,\tau} (\tilde{\bm w}_{k,t,\tau} )$ is the stochastic gradient over the sparse model weights $\tilde{\bm w}_{k,t,\tau}$, and $\tilde{\bm w}_{k,t,0}= \bm m_{k,t} \odot \bm w_t$ is the synchronized local weights at the beginning of a communication round.
\end{fact}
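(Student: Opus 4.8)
The plan is to recognize that this Fact is not an analytic result but a bookkeeping restatement: it re-expresses the per-step local update already fixed by the algorithm in the compact stochastic-gradient notation adopted at the outset of the proof. Accordingly, no estimation or inequality is involved; the entire task is to verify that the displayed identity coincides term by term with the update rule of Eq.~(\ref{local SGD update2}) and that the stated initialization is consistent with the FL-adapted update and the pseudocode.

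First I would recall Eq.~(\ref{local SGD update2}), which defines the $(\tau+1)$-th local SGD step as $\tilde{\bm w}_{k,t,\tau+1} = \tilde{\bm w}_{k,t,\tau} - \eta\, \bm m_{k,t} \odot \nabla_{\tilde{\bm w}_{k,t,\tau}} \mathcal{L}(\tilde{\bm w}_{k,t,\tau};\xi_{k,t,\tau})$. I would then substitute the shorthand $\bm g_{k,t,\tau}(\tilde{\bm w}_{k,t,\tau}) = \nabla_{\tilde{\bm w}_{k,t,\tau}} \mathcal{L}(\tilde{\bm w}_{k,t,\tau};\xi_{k,t,\tau})$ introduced just before the Fact. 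This substitution reproduces the right-hand side of the claimed identity verbatim, modulo the harmless relabeling of the constant step size $\eta$ in Eq.~(\ref{local SGD update2}) as the round-dependent $\eta_t$ used both in the convergence theorem and in the local-update line of Algorithm~\ref{algorithm1}.

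Next I would justify the initialization clause. In the FL-adapted update the local iterate at the start of each communication round is reset by synchronizing with the masked global model, which is exactly $\tilde{\bm w}_{k,t,0} = \bm m_{k,t} \odot \bm w_t$; this matches the server-to-client transmission recorded in Algorithm~\ref{algorithm1}, where the sparse weights sent to client $k$ are $\tilde{\bm w}_{k,t,0} = \bm m_{k,t} \odot \bm w_t$. Hence both the recursive step and the base case of the Fact are transcriptions of ingredients already specified in the problem setup.

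Since the statement is definitional, I do not expect a genuine obstacle. The only point requiring care is notational consistency across three places where the same update appears: the display Eq.~(\ref{local SGD update2}), the pseudocode of Algorithm~\ref{algorithm1}, and the abbreviation $\bm g_{k,t,\tau}$. Checking that the mask $\bm m_{k,t}$ is applied to the gradient in all three, and that $\eta$ and $\eta_t$ refer to the same step-size schedule, suffices to complete the verification.
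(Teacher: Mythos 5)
Your proposal is correct and matches the paper exactly: the paper offers no proof for this Fact, presenting it as a direct transcription of Eq.~(\ref{local SGD update2}) with the shorthand $\bm g_{k,t,\tau}(\tilde{\bm w}_{k,t,\tau}) = \nabla_{\tilde{\bm w}_{k,t,\tau}} \mathcal{L}(\tilde{\bm w}_{k,t,\tau};\xi_{k,t,\tau})$ and the synchronization $\tilde{\bm w}_{k,t,0}= \bm m_{k,t} \odot \bm w_t$, just as you verify. Your attention to the $\eta$ versus $\eta_t$ relabeling across Eq.~(\ref{local SGD update2}), Algorithm~\ref{algorithm1}, and the Fact is a fair and accurate consistency check.
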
	
\begin{fact}[Local update from $k$-th client ]
	\label{local update fact}
	The local update of clients can be formalized as follows:
	\begin{equation}
		\begin{split}
			\bm U_{k,t} &= \tilde{\bm w}_{k,t,0} - \tilde{\bm w}_{k,t,N} = \eta_t    \sum_{\tau } \bm m_{k,t} \odot \bm g_{k,t,\tau} (\tilde{\bm w}_{k,t,\tau} ) .
		\end{split}
	\end{equation}
\end{fact}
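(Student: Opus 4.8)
The plan is to verify the two equalities in the statement separately. The first, $\bm U_{k,t} = \tilde{\bm w}_{k,t,0} - \tilde{\bm w}_{k,t,N}$, holds by definition: this is precisely how the local update $\bm U_{k,t}$ is set in the client's main loop of Algorithm~\ref{algorithm1} (and restated in Eq.~(\ref{aggregation2})). No work is required there beyond invoking the definition, so the entire content of the fact resides in the second equality.

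For the second equality, I would telescope the single local step recorded in Fact~\ref{local step fact}. Rearranging that recursion gives, for each $\tau$,
\begin{equation*}
\tilde{\bm w}_{k,t,\tau} - \tilde{\bm w}_{k,t,\tau+1} = \eta_t \, \bm m_{k,t} \odot \bm g_{k,t,\tau}(\tilde{\bm w}_{k,t,\tau}).
\end{equation*}
Summing both sides over $\tau = 0, 1, \dots, N-1$, the left-hand side is a telescoping sum whose intermediate terms $\tilde{\bm w}_{k,t,1}, \dots, \tilde{\bm w}_{k,t,N-1}$ cancel in consecutive pairs, leaving exactly $\tilde{\bm w}_{k,t,0} - \tilde{\bm w}_{k,t,N}$, while the right-hand side becomes $\eta_t \sum_{\tau} \bm m_{k,t} \odot \bm g_{k,t,\tau}(\tilde{\bm w}_{k,t,\tau})$. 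Chaining this identity with the definition established in the first step yields the claim.

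There is no substantive obstacle here; the statement is an elementary consequence of the update rule, and the proof amounts to a one-line telescoping argument. The only bookkeeping point worth flagging is that the mask $\bm m_{k,t}$ is held fixed across all $N$ local steps within round $t$ (it carries only the round index $t$, not the step index $\tau$), which is exactly what permits the common factor $\bm m_{k,t}$ to be pulled outside the sum in the final expression. Were the mask to vary per local step, the summand would instead have to retain a step-dependent mask $\bm m_{k,t,\tau}$ inside the sum; since FedSpa deliberately freezes the mask throughout a communication round, no such complication arises, and the clean factored form in the statement is legitimate.
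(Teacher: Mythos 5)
Your proof is correct and coincides with the paper's own (implicit) justification: the paper states this as a Fact following directly from the definition of $\bm U_{k,t}$ in Algorithm~1 and the per-step recursion in Eq.~(3)/Fact~1, which is exactly your telescoping argument. Your remark that the round-fixed mask $\bm m_{k,t}$ is what permits the factored form is an accurate and worthwhile observation, but it adds nothing beyond what the paper's setup already guarantees.
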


\begin{fact}[Server's update]
	\label{server aggregation fact}
	The server aggregates the sparse update by averaging, which can be formalized as follows:
	\begin{equation}
		\bm w_{t+1} = \bm w_{t} -  \frac{1}{ S} \sum_{k \in S_t }  \bm U_{k,t}  = \bm w_{t} - \frac{\eta_t }{S} \sum_{k \in S_t,\tau } \bm m_{k,t} \odot \bm g_{k,t,\tau} (\tilde{\bm w}_{k,t,\tau} ) .
	\end{equation}
\end{fact}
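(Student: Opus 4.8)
The plan is to derive the stated identity purely by algebraic substitution, chaining Facts \ref{local step fact} and \ref{local update fact} into the server's averaging rule; there is no analytic content here, only bookkeeping. I would proceed in two short steps.

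First I would establish the telescoped form of the local update, namely Fact \ref{local update fact}. Starting from the per-step recursion in Fact \ref{local step fact}, I rearrange each local step as $\tilde{\bm w}_{k,t,\tau} - \tilde{\bm w}_{k,t,\tau+1} = \eta_t\, \bm m_{k,t} \odot \bm g_{k,t,\tau}(\tilde{\bm w}_{k,t,\tau})$ and sum over $\tau = 0, 1, \dots, N-1$. The left-hand side telescopes to $\tilde{\bm w}_{k,t,0} - \tilde{\bm w}_{k,t,N}$, which is precisely the definition of $\bm U_{k,t}$ produced in the client loop of Algorithm \ref{algorithm1}. Since both the learning rate $\eta_t$ and the mask $\bm m_{k,t}$ are held fixed across all inner steps $\tau$ of round $t$ (the mask is frozen throughout the local loop and only regenerated by \textsc{Next\_Masks} afterward), they factor out of the summation, giving $\bm U_{k,t} = \eta_t \sum_\tau \bm m_{k,t} \odot \bm g_{k,t,\tau}(\tilde{\bm w}_{k,t,\tau})$.

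Second I would substitute this expression into the server's aggregation rule. The first equality is simply the averaging step $\bm w_{t+1} = \bm w_t - \frac{1}{S}\sum_{k \in S_t} \bm U_{k,t}$ recorded in Algorithm \ref{algorithm1}, which matches Eq.~(\ref{aggregation2}) under the identification $S \equiv |S_t|$. Plugging in the telescoped $\bm U_{k,t}$ and pulling the common scalar $\eta_t$ out front converts the single sum over $k$ into the double sum over $k \in S_t$ and $\tau$, yielding the second equality exactly as stated.

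The only point requiring care — and it is a modelling convention rather than a genuine obstacle — is that the mask $\bm m_{k,t}$ carries no dependence on the inner index $\tau$, so it commutes with the summation over $\tau$ and can be written outside each summand without ambiguity; likewise $\eta_t$ is constant within the round and factors out cleanly. With these conventions fixed the identity is immediate, and the fact serves only to record the precise form of the server update that the subsequent convergence analysis will manipulate.
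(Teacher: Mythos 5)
Your proposal is correct and matches the paper's own (implicit) derivation exactly: the paper treats this fact as pure bookkeeping, obtained by telescoping the local recursion of Fact \ref{local step fact} into Fact \ref{local update fact} and substituting into the averaging rule of Eq.~(\ref{aggregation2}) with $S = |S_t|$, precisely as you do. Your remarks that $\eta_t$ and $\bm m_{k,t}$ are constant within the round and hence factor out of the inner sum are the same conventions the paper relies on.
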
 
\begin{fact}[Global loss, local loss, and gradient]
	\label{global and local loss}
	As per our SPFL problem (P2), the local loss of a client is denoted by $F_k(\tilde{\bm w}_k)$, and is formulated as:
	\begin{equation}
		F_k( \tilde{\bm w}_k) =\mathbb{E}[ \mathcal{L}_{(\bm x,  y) \sim \mathcal{D}_k }( \tilde{\bm w}_k; (\bm x,  y))]
	\end{equation}
	where $\tilde{\bm w}_k=\bm m_{k}^* \odot \bm w$, and the global loss can be formalized as follows:
	\begin{equation}
		f(\bm w)=\frac{1}{K} \sum_{k=1}^{K}  F_{k}( \tilde{\bm w}_k )=\frac{1}{K} \sum_{k=1}^{K}  F_{k}( \bm m_{k}^* \odot \bm w ).
	\end{equation}
	Finally the gradient of the global loss can be formalized as follows:
	\begin{equation}
		\nabla_w f(\bm w) = \frac{1}{K} \sum_{k=1}^{K} \bm m_{k}^* \odot   \nabla_{\tilde{\bm w}_k} F_{k}(  \tilde{\bm w}_k ),
	\end{equation}
	where $\nabla_{\tilde{\bm w}_k} F_{k}(  \tilde{\bm w}_k )$ is the gradient over the sparse local models.
\end{fact}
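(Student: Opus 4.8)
The plan is to treat this statement as two logically distinct pieces. The two displayed \emph{definitions}---of the local loss $F_k(\tilde{\bm w}_k)$ and of the global loss $f(\bm w)$---carry no mathematical content on their own: they are obtained by transcribing problem (P2) verbatim under the substitution $\tilde{\bm w}_k = \bm m_k^* \odot \bm w$, and establishing them amounts only to checking notational consistency (that the argument fed to each $F_k$ in $f$ is exactly the masked weight $\bm m_k^* \odot \bm w$). The only part that requires an actual argument is the \emph{gradient identity} $\nabla_w f(\bm w) = \frac{1}{K}\sum_k \bm m_k^* \odot \nabla_{\tilde{\bm w}_k} F_k(\tilde{\bm w}_k)$, which I would establish by a coordinate-wise chain-rule computation that exploits the binary structure of the mask.

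First I would fix a client $k$ and view $g_k(\bm w) := F_k(\bm m_k^* \odot \bm w)$ as the composition of the linear map $\bm w \mapsto \tilde{\bm w}_k = \mathrm{diag}(\bm m_k^*)\,\bm w$ with $F_k$. The Jacobian of this linear map is the constant diagonal matrix $\mathrm{diag}(\bm m_k^*)$, and because $\bm m_k^* \in \{0,1\}^d$ this matrix is symmetric and idempotent---an orthogonal projection onto the active coordinates---so its transpose equals itself. The chain rule then gives $\nabla_{\bm w} g_k(\bm w) = \mathrm{diag}(\bm m_k^*)^{\top}\nabla_{\tilde{\bm w}_k} F_k(\tilde{\bm w}_k) = \mathrm{diag}(\bm m_k^*)\,\nabla_{\tilde{\bm w}_k} F_k(\tilde{\bm w}_k) = \bm m_k^* \odot \nabla_{\tilde{\bm w}_k} F_k(\tilde{\bm w}_k)$. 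Equivalently, working entry by entry from $(\tilde{\bm w}_k)_j = m_{k,j}^* w_j$ one has $\partial (\tilde{\bm w}_k)_j / \partial w_i = m_{k,j}^* \delta_{ij}$, so summing the chain rule over $j$ isolates the single surviving term $m_{k,i}^*\,\partial F_k / \partial (\tilde{\bm w}_k)_i$; reassembling over $i$ recovers the vector identity. Averaging over $k$ and using linearity of the gradient then yields the stated formula for $\nabla_w f(\bm w)$.

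The differentiability needed to invoke the chain rule is supplied by the $L$-smoothness hypothesis (Assumption \ref{L-smoothness}), which presupposes that each $F_k$ is continuously differentiable. The one genuine subtlety---and the step I would flag as the main obstacle---is the implicit interchange of differentiation and expectation in $F_k(\tilde{\bm w}_k) = \mathbb{E}[\mathcal{L}(\tilde{\bm w}_k; (\bm x, y))]$: to treat $\nabla_{\tilde{\bm w}_k} F_k$ as the expectation of the per-sample gradient one must justify differentiating under the integral sign, which follows from a dominated-convergence argument using a uniform integrable bound on $\nabla \mathcal{L}$ (as guaranteed by the bounded-variance and bounded-gradient Assumptions \ref{bounded variance} and \ref{bounded gradient}). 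Once this regularity point is dispatched, the identity is a one-line consequence of the binary mask acting as a symmetric idempotent projection, so no estimation or inequality work is required.
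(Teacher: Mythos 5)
Your proposal is correct, but it is worth noting that the paper offers no proof of this statement at all: Fact~\ref{global and local loss} is stated in the appendix purely as notational bookkeeping, reiterating the (P2) formulation so that the identities $f(\bm w)=\frac{1}{K}\sum_k F_k(\bm m_k^*\odot\bm w)$ and $\nabla_w f(\bm w)=\frac{1}{K}\sum_k \bm m_k^*\odot\nabla_{\tilde{\bm w}_k}F_k(\tilde{\bm w}_k)$ can be cited in the convergence proof (e.g., in Lemma~\ref{Smoothness of f} and in bounding $T_1$ and $T_4$). Your chain-rule computation---viewing $\bm w\mapsto\bm m_k^*\odot\bm w$ as the linear map $\mathrm{diag}(\bm m_k^*)$, noting it is symmetric and idempotent because $\bm m_k^*\in\{0,1\}^d$, and hence $\nabla_{\bm w}F_k(\bm m_k^*\odot\bm w)=\mathrm{diag}(\bm m_k^*)^{\top}\nabla_{\tilde{\bm w}_k}F_k(\tilde{\bm w}_k)=\bm m_k^*\odot\nabla_{\tilde{\bm w}_k}F_k(\tilde{\bm w}_k)$, then averaging over $k$---is exactly the justification the paper leaves implicit, and it is the natural one. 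One caveat on your regularity discussion: flagging the interchange of $\nabla$ and $\mathbb{E}$ is a fair point the paper glosses over, but your proposed justification via Assumptions~\ref{bounded variance} and~\ref{bounded gradient} does not actually deliver a dominating function. Assumption~\ref{bounded gradient} bounds $\|\nabla f(\bm w)\|$, whose very definition presupposes the identity you are proving, and Assumption~\ref{bounded variance} already \emph{assumes} that $\bm g_{k,t,\tau}(\tilde{\bm w})$ is an unbiased estimator of $\nabla F_k(\tilde{\bm w})$, i.e., the differentiation-under-the-expectation step is baked into the paper's hypotheses rather than derivable from them; the honest reading is that this regularity is assumed outright, as is standard in the FL literature the paper follows. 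With that adjustment, your argument is complete, and it supplies a derivation where the paper supplies only a declaration.
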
 
\subsection{Auxiliary Lemmas}
In the following. we shall present several common lemmas that are heavily used in the FL literature.
\begin{lemma}[Cauchy-Schwarz]
	\label{cauchy}
	Assume arbitrary vector sequences $ \{ \bm a_k \}_{k=1,\dots,K}$ and $ \{ \bm b_k \}_{k=1,\dots,K}$, Cauchy-Schwarz inequality implies:  
	\begin{equation}
		\left\|\sum_{k=1}^{K} \bm a_{k} \bm b_{k}\right\|^{2} \leq\left(\sum_{k=1}^{K}\left\|\bm a_{k}\right\|^{2}\right)\left(\sum_{k=1}^{K}\left\| \bm b_{k}\right\|^{2}\right) ,
	\end{equation} 
	by taking $b_k=\mathbf{1}$, we also have:
	\begin{equation}
		\left\|\sum_{k=1}^{K} \bm a_{k} \right\|^{2} \leq K\left(\sum_{k=1}^{K}\left\|\bm a_{k}\right\|^{2}\right) ,
	\end{equation} 
\end{lemma}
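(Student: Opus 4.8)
The two displayed inequalities are, respectively, the classical discrete Cauchy--Schwarz inequality and its unweighted corollary, so the plan is to establish the first bound from the scalar Cauchy--Schwarz inequality and then obtain the second by a one-line specialization. Reading $\bm b_k$ as scalar weights, so that $\bm a_k \bm b_k$ is a scalar multiple of the vector $\bm a_k$ and $\|\bm a_k \bm b_k\| = \|\bm b_k\|\,\|\bm a_k\|$, I would first reduce the vector-valued left-hand side to a statement about the two nonnegative scalar sequences $\{\|\bm a_k\|\}$ and $\{\|\bm b_k\|\}$.

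Concretely, the first step is to strip off the vector structure using the triangle inequality together with absolute homogeneity of the norm, giving $\left\|\sum_{k=1}^{K}\bm a_k \bm b_k\right\| \le \sum_{k=1}^{K}\|\bm b_k\|\,\|\bm a_k\|$. The second step applies the scalar Cauchy--Schwarz inequality to the two nonnegative sequences, yielding $\sum_{k=1}^{K}\|\bm b_k\|\,\|\bm a_k\| \le \bigl(\sum_{k=1}^{K}\|\bm b_k\|^2\bigr)^{1/2}\bigl(\sum_{k=1}^{K}\|\bm a_k\|^2\bigr)^{1/2}$; squaring both sides then reproduces exactly the first claimed inequality. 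If a fully self-contained argument is preferred, the scalar Cauchy--Schwarz step can itself be proved by the standard discriminant trick: the quadratic $q(\lambda)=\sum_{k=1}^{K}\bigl(\lambda\|\bm b_k\| - \|\bm a_k\|\bigr)^2 \ge 0$ holds for every $\lambda\in\mathbb{R}$, so its discriminant in $\lambda$ must be nonpositive, which is precisely the squared inequality above.

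For the second displayed bound, the plan is simply to substitute $\bm b_k=\mathbf{1}$ into the first inequality. Then $\sum_{k=1}^{K}\bm a_k \bm b_k = \sum_{k=1}^{K}\bm a_k$ while $\sum_{k=1}^{K}\|\bm b_k\|^2 = K$, so the right-hand side collapses to $K\sum_{k=1}^{K}\|\bm a_k\|^2$, which is the stated corollary used throughout the convergence proof.

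Since this is a textbook inequality, there is no genuine difficulty to surmount; the only point requiring care is fixing the interpretation of the mixed product $\bm a_k \bm b_k$ and of the outer norm in the first display. The one subtlety worth flagging is that the reduction in the first step relies on the submultiplicative bound $\|\bm a_k \bm b_k\| \le \|\bm a_k\|\,\|\bm b_k\|$, and under every natural reading of the product (scalar weights $\bm b_k$, or the degenerate scalar case) this holds verbatim, after which the triangle-inequality-plus-scalar-Cauchy--Schwarz route closes the argument.
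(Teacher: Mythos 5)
Your proof is correct; note that the paper itself states this lemma without proof, treating it as a standard auxiliary fact, so there is no authorial argument to compare against. Your route (triangle inequality plus absolute homogeneity to reduce to the scalar sequences $\{\|\bm a_k\|\}$, $\{\|\bm b_k\|\}$, then scalar Cauchy--Schwarz via the discriminant trick, then squaring) establishes exactly the first display, and it remains valid under the Hadamard reading of the product as well, since $\|\bm a \odot \bm b\| \le \|\bm a\|\,\|\bm b\|$ for the Euclidean norm. One small point you handled implicitly but should make explicit: if $\mathbf{1}$ were read as the all-ones \emph{vector} in $\mathbb{R}^d$, then $\sum_{k=1}^{K}\|\bm b_k\|^2 = Kd$ and the specialization would only yield the weaker constant $Kd$; your scalar reading $b_k=1$ is the intended one, and in any case the second display stands on its own by convexity of $\|\cdot\|^2$ (Jensen), which is also the only form of the lemma the paper actually invokes in bounding $T_1$ and $T_4$.
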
 

\begin{lemma} [Separating mean and variance, Lemma B.3 \cite{xu2021fedcm}]
	\label {sperating mean}
	Let $\left\{\bm a_{1}, \ldots, \bm a_{\tau}\right\}$ be $\tau$ random vectors in $\mathbb{R}^{d}$ . Suppose that $\left\{\bm a_{i}- \bm \xi_{i}\right\}$ form a martingale difference sequence, i.e. $\mathbb{E}\left[\bm a_{i}- \bm \xi_{i} \mid \bm a_{1}, \ldots ,\bm a_{i-1}\right]=0$, and suppose that their variance is bounded by $\mathbb{E}\left[\left\|\bm a_{i}-\bm \xi_{i}\right\|^{2}\right] \leq \sigma^{2} .$ Then, the following inequality holds:
	$$
	\mathbb{E}\left[\left\|\sum_{i=1}^{\tau} \bm a_{i}\right\|^{2}\right] \leq 2 \left\|\sum_{i=1}^{\tau} \bm \xi_{i}\right\|^{2}+2 \tau \sigma^{2} .
	$$
\end{lemma}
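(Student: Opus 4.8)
The plan is to prove this standard mean--variance decomposition by splitting each $\bm a_i$ into its predictable part $\bm \xi_i$ and a martingale-difference increment, and then exploiting the orthogonality of those increments in expectation. First I would set $\bm d_i := \bm a_i - \bm \xi_i$, so that by hypothesis $\mathbb{E}[\bm d_i \mid \bm a_1,\dots,\bm a_{i-1}] = 0$ and $\mathbb{E}[\|\bm d_i\|^2] \le \sigma^2$. Writing $\sum_i \bm a_i = \sum_i \bm \xi_i + \sum_i \bm d_i$ and applying the elementary bound $\|\bm x + \bm y\|^2 \le 2\|\bm x\|^2 + 2\|\bm y\|^2$ (the two-term instance of the second inequality in Lemma~\ref{cauchy}, with $K=2$), taking expectations gives
\begin{equation*}
\mathbb{E}\Bigl[\,\Bigl\|\sum_{i=1}^{\tau}\bm a_i\Bigr\|^2\,\Bigr] \le 2\Bigl\|\sum_{i=1}^{\tau}\bm \xi_i\Bigr\|^2 + 2\,\mathbb{E}\Bigl[\,\Bigl\|\sum_{i=1}^{\tau}\bm d_i\Bigr\|^2\,\Bigr].
\end{equation*}
It then remains only to show that the variance term is controlled by $\tau\sigma^2$, which finishes the proof after substitution.

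To control that term, I would expand the squared norm of the sum of increments into its diagonal and off-diagonal contributions,
\begin{equation*}
\mathbb{E}\Bigl[\,\Bigl\|\sum_{i=1}^{\tau}\bm d_i\Bigr\|^2\,\Bigr] = \sum_{i=1}^{\tau}\mathbb{E}\bigl[\|\bm d_i\|^2\bigr] + 2\sum_{i<j}\mathbb{E}\bigl[\langle \bm d_i, \bm d_j\rangle\bigr].
\end{equation*}
The diagonal sum is immediately bounded by $\tau\sigma^2$ using the variance hypothesis $\mathbb{E}[\|\bm d_i\|^2] \le \sigma^2$, so everything hinges on the cross terms vanishing.

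The main obstacle, and really the only nontrivial step, is establishing that each cross term is zero via the tower property. For $i<j$, I would condition on $\bm a_1,\dots,\bm a_{j-1}$: since $\bm \xi_i$ is predictable (measurable with respect to $\bm a_1,\dots,\bm a_{i-1}$), the increment $\bm d_i$ is a function of $\bm a_1,\dots,\bm a_i$ and hence measurable with respect to this conditioning, so it may be pulled outside the inner expectation, yielding
\begin{equation*}
\mathbb{E}\bigl[\langle \bm d_i, \bm d_j\rangle\bigr] = \mathbb{E}\bigl[\langle \bm d_i,\, \mathbb{E}[\bm d_j \mid \bm a_1,\dots,\bm a_{j-1}]\rangle\bigr] = 0,
\end{equation*}
where the last equality uses the martingale-difference property $\mathbb{E}[\bm d_j \mid \bm a_1,\dots,\bm a_{j-1}] = 0$. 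Combining the vanished cross terms with the diagonal bound gives $\mathbb{E}[\|\sum_i \bm d_i\|^2] \le \tau\sigma^2$, and plugging this into the first display completes the argument. The care needed is purely in the measurability bookkeeping of the conditioning filtration; the algebra is routine.
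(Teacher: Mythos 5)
Your proof is correct. The paper itself gives no proof of Lemma~\ref{sperating mean} --- it is imported as-is from \cite{xu2021fedcm} (Lemma B.3, itself a variant of SCAFFOLD's Lemma 4) --- and your argument (split $\bm a_i = \bm \xi_i + \bm d_i$, apply $\|\bm x+\bm y\|^2 \le 2\|\bm x\|^2 + 2\|\bm y\|^2$, kill the cross terms $\mathbb{E}[\langle \bm d_i,\bm d_j\rangle]$ by the tower property, bound the diagonal by $\tau\sigma^2$) is exactly the standard proof given in that source. The one subtlety you correctly flagged is the measurability bookkeeping: the hypothesis as stated only gives $\mathbb{E}[\bm a_i - \bm\xi_i \mid \bm a_1,\dots,\bm a_{i-1}]=0$, so pulling $\bm d_i$ out of the conditional expectation requires $\bm\xi_i$ to be predictable, which holds in the paper's application (in Lemma~\ref{seperate variance second lemma} the $\bm\xi_i$ are the conditional means $\frac{1}{S}\bm m_{k,t}\odot\nabla F_k(\tilde{\bm w}_{k,t,\tau})$, functions of past iterates); note also that since the $\bm\xi_i$ are random there, the expectation must be retained on the $\bigl\|\sum_i \bm\xi_i\bigr\|^2$ term, as your first display and the paper's usage both do.
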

\begin{lemma} [Relaxed triangle inequality, Lemma 3 \cite{karimireddy2020scaffold}] \label{relaxed triangle} Let $\bm v_i$ and $\bm v_j$ be vectors in $\mathbb{R}^{d}$.  Then the following inequality holds true for any $a>0$:
	\begin{equation}
		\left\|\boldsymbol{v}_{i}+\boldsymbol{v}_{j}\right\|^{2} \leq(1+a)\left\|\boldsymbol{v}_{i}\right\|^{2}+\left(1+\frac{1}{a}\right)\left\|\boldsymbol{v}_{j}\right\|^{2}.
	\end{equation}
\end{lemma}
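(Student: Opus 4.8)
The plan is to reduce the claimed inequality to the elementary fact that a squared norm is nonnegative, via an application of the weighted Young inequality (equivalently, the scalar AM--GM inequality). First I would expand the left-hand side using bilinearity of the inner product on $\mathbb{R}^d$, writing $\|\bm v_i + \bm v_j\|^2 = \|\bm v_i\|^2 + 2\langle \bm v_i, \bm v_j\rangle + \|\bm v_j\|^2$. The only term not already appearing on the right-hand side in the desired form is the cross term $2\langle \bm v_i, \bm v_j\rangle$, so the entire argument amounts to bounding this cross term by $a\|\bm v_i\|^2 + \tfrac{1}{a}\|\bm v_j\|^2$ for the given $a>0$.

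To obtain that bound I would start from the trivial nonnegativity $\bigl\| \sqrt{a}\,\bm v_i - \tfrac{1}{\sqrt{a}}\,\bm v_j \bigr\|^2 \ge 0$, which is well defined precisely because $a>0$. Expanding this square gives $a\|\bm v_i\|^2 - 2\langle \bm v_i, \bm v_j\rangle + \tfrac{1}{a}\|\bm v_j\|^2 \ge 0$, i.e. $2\langle \bm v_i, \bm v_j\rangle \le a\|\bm v_i\|^2 + \tfrac{1}{a}\|\bm v_j\|^2$. Substituting this estimate into the expansion from the first step and collecting the coefficients of $\|\bm v_i\|^2$ and $\|\bm v_j\|^2$ yields exactly $(1+a)\|\bm v_i\|^2 + (1+\tfrac{1}{a})\|\bm v_j\|^2$, which is the claim.

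Since this is a standard algebraic manipulation, I do not anticipate any genuine obstacle; the only point requiring care is the role of the hypothesis $a>0$, which is needed both so that $\sqrt{a}$ and $1/\sqrt{a}$ are real and so that the splitting preserves the inequality direction. As consistency checks I would note that equality holds when $\sqrt{a}\,\bm v_i = \tfrac{1}{\sqrt{a}}\,\bm v_j$, i.e. $\bm v_j = a\,\bm v_i$, confirming that the constants $(1+a)$ and $(1+\tfrac{1}{a})$ are sharp for a fixed $a$, and that the special case $a=1$ recovers the familiar bound $\|\bm v_i + \bm v_j\|^2 \le 2\|\bm v_i\|^2 + 2\|\bm v_j\|^2$ already used implicitly through Lemma \ref{cauchy}.
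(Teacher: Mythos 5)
Your proof is correct, and it coincides with the argument behind the cited result: the paper does not reproduce a proof but defers to Lemma 3 of \cite{karimireddy2020scaffold}, whose proof rests on exactly the identity $\left\|\bm v_i+\bm v_j\right\|^2=(1+a)\left\|\bm v_i\right\|^2+\left(1+\tfrac{1}{a}\right)\left\|\bm v_j\right\|^2-\left\|\sqrt{a}\,\bm v_i-\tfrac{1}{\sqrt{a}}\,\bm v_j\right\|^2$ that you derive in two steps via the nonnegativity of the last square. Your equality-case and $a=1$ sanity checks are also accurate, so there is nothing to add.
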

\begin{lemma}
	\label{separate mean 0 variable}
	For random vector  $\bm v_1$ satisfying $\mathbb{E} [\bm v_1]=\left[\begin{array}{c}
		0 \\
		\cdot \\
		\cdot \\
		0
	\end{array}\right]$, and assume another random vector $\bm v_2$ is independent with $\bm v_1$, we have:
	\begin{equation}
		\mathbb{E}	[\left\|\bm v_1 + \bm v_2 \right\|^{2} ] = \mathbb{E} [\| \bm v_1 \|^2] + \mathbb{E} \| \bm v_2 \|^2 
	\end{equation}
\end{lemma}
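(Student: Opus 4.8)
The plan is to expand the squared norm, take expectations term by term, and show that the cross term vanishes. I would begin by writing out the inner-product expansion of the squared norm:
\begin{equation*}
	\|\bm v_1 + \bm v_2\|^2 = \|\bm v_1\|^2 + 2 \langle \bm v_1, \bm v_2 \rangle + \|\bm v_2\|^2,
\end{equation*}
which holds deterministically for any realization of the two random vectors. Applying expectation to both sides and using linearity of expectation yields
\begin{equation*}
	\mathbb{E}[\|\bm v_1 + \bm v_2\|^2] = \mathbb{E}[\|\bm v_1\|^2] + 2\,\mathbb{E}[\langle \bm v_1, \bm v_2 \rangle] + \mathbb{E}[\|\bm v_2\|^2],
\end{equation*}
so the entire statement reduces to proving that the cross term $\mathbb{E}[\langle \bm v_1, \bm v_2 \rangle]$ equals zero.

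The key step is to handle this cross term. I would write the inner product coordinate-wise as $\langle \bm v_1, \bm v_2 \rangle = \sum_{i=1}^{d} v_{1,i}\, v_{2,i}$ and push the expectation inside the finite sum. By the independence of $\bm v_1$ and $\bm v_2$, each product factorizes, giving $\mathbb{E}[v_{1,i}\, v_{2,i}] = \mathbb{E}[v_{1,i}]\,\mathbb{E}[v_{2,i}]$ for every coordinate $i$. Equivalently, this is the statement $\mathbb{E}[\langle \bm v_1, \bm v_2\rangle] = \langle \mathbb{E}[\bm v_1], \mathbb{E}[\bm v_2]\rangle$. Since the hypothesis $\mathbb{E}[\bm v_1] = \bm 0$ forces $\mathbb{E}[v_{1,i}] = 0$ for all $i$, every summand vanishes and hence $\mathbb{E}[\langle \bm v_1, \bm v_2 \rangle] = 0$. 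Substituting this back produces the claimed decomposition $\mathbb{E}[\|\bm v_1 + \bm v_2\|^2] = \mathbb{E}[\|\bm v_1\|^2] + \mathbb{E}[\|\bm v_2\|^2]$.

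There is no genuine obstacle here, as this is the standard bias--variance (orthogonality) decomposition; the only point requiring care is the justification that the cross term factorizes, which rests entirely on the independence assumption together with the zero-mean condition on $\bm v_1$. I would note for completeness that integrability of $\|\bm v_1\|^2$ and $\|\bm v_2\|^2$ (implicit in the statement, since those expectations appear on the right-hand side) guarantees that all the expectations above are finite and that the interchange of expectation and the finite coordinate sum is valid. In the intended application, $\bm v_1$ plays the role of a mean-zero stochastic-gradient noise term and $\bm v_2$ an independent deterministic-in-conditional-expectation increment, so the lemma is exactly what isolates the variance contribution in the convergence analysis.
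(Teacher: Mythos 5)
Your proof is correct and follows essentially the same route as the paper's: expand the squared norm, use independence to write $\mathbb{E}\langle \bm v_1, \bm v_2\rangle = \langle \mathbb{E}\bm v_1, \mathbb{E}\bm v_2\rangle$, and invoke the zero-mean condition on $\bm v_1$ to kill the cross term. Your coordinate-wise factorization and the remark on integrability merely spell out details the paper leaves implicit.
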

\begin{proof}
	\begin{equation}
		\begin{split}
			\mathbb{E}	[\left\|\bm v_1 + \bm v_2\right\|^{2} ] &= \mathbb{E} \langle \bm v_1 + \bm v_2, \bm v_1+ \bm v_2  \rangle \\
			& = \mathbb{E} \| \bm v_1 \|^2 + \mathbb{E} \| \bm v_2 \|^2 + 2 \mathbb{E}\langle \bm v_1 ,\bm v_2 \rangle \\
			& = \mathbb{E} \| \bm v_1 \|^2 + \mathbb{E} \| \bm v_2 \|^2 + 2 \langle \mathbb{E}\bm v_1 ,\mathbb{E} \bm v_2 \rangle \\
			& = \mathbb{E} \| \bm v_1 \|^2 + \mathbb{E}\| \bm v_2\|^2 
		\end{split}
	\end{equation}
	This completes the proof. 
\end{proof}

\subsection{Key Lemmas}
In this section, we present several important lemmas that would be used in our formal proof. All the presented claims are rigorously proved. 
\begin{lemma}[Smoothness of  $f ( \bm w)$] \label{Smoothness of f} Assume $\tilde{\bm w}_k = \bm m_k^* \odot \bm w $ for any $\bm m_k^* \in \{0,1\}^d $, we have $ L$-smoothness for $ f ( \bm w) =\frac{1}{K} \sum_k F_k (\tilde{\bm w}_k)$, i.e., for any $\bm w_1, \bm w_2 \in \mathbb{R}^d$, we have:
\end{lemma}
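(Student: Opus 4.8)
The plan is to show that the aggregate loss $f$ inherits the modulus $L$ directly from the per-client smoothness in Assumption~\ref{L-smoothness}, with the binary masks contributing no amplification. The conclusion to establish is $\|\nabla f(\bm w_1) - \nabla f(\bm w_2)\| \leq L \|\bm w_1 - \bm w_2\|$ for all $\bm w_1, \bm w_2 \in \mathbb{R}^d$, which is precisely the definition of $L$-smoothness for $f$.

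First I would invoke the gradient expression from Fact~\ref{global and local loss}, namely $\nabla f(\bm w) = \frac{1}{K}\sum_k \bm m_k^* \odot \nabla F_k(\bm m_k^* \odot \bm w)$, form the difference of the two gradients, and pull the difference inside the average and inside each Hadamard product. Applying the triangle inequality (equivalently, convexity of the norm) over the $K$ summands reduces the task to bounding, for each $k$, the single term $\|\bm m_k^* \odot (\nabla F_k(\bm m_k^* \odot \bm w_1) - \nabla F_k(\bm m_k^* \odot \bm w_2))\|$.

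The one observation that does all the work is that Hadamard multiplication by a binary vector $\bm m_k^* \in \{0,1\}^d$ is non-expansive: since each entry is $0$ or $1$, we have $\|\bm m_k^* \odot \bm v\|^2 = \sum_i (m_{k,i}^*)^2 v_i^2 \leq \|\bm v\|^2$ for any $\bm v \in \mathbb{R}^d$. I would apply this twice. First to strip the outer mask, giving $\|\bm m_k^* \odot (\cdots)\| \leq \|\nabla F_k(\bm m_k^* \odot \bm w_1) - \nabla F_k(\bm m_k^* \odot \bm w_2)\|$; then Assumption~\ref{L-smoothness}, applied to the sparse arguments $\bm m_k^* \odot \bm w_1$ and $\bm m_k^* \odot \bm w_2$, bounds this by $L\|\bm m_k^* \odot (\bm w_1 - \bm w_2)\|$; finally the same non-expansiveness, now applied to the weight difference, bounds it by $L\|\bm w_1 - \bm w_2\|$. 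Averaging over $k$ returns the claimed bound with the identical constant $L$, since every summand is controlled in the same way.

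There is no substantive obstacle here beyond recognizing the non-expansiveness property; the two points to be careful about are that Assumption~\ref{L-smoothness} is stated for \emph{arbitrary} arguments in $\mathbb{R}^d$, so it applies verbatim to the sparsified weights $\bm m_k^* \odot \bm w_i$ with no extra hypotheses, and that the mask enters on both the gradient side and the weight side, so the inequality $\|\bm m_k^* \odot \bm v\| \leq \|\bm v\|$ must be deployed in two distinct places rather than one.
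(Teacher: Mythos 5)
Your proposal is correct and follows essentially the same route as the paper's own proof: both invoke the gradient expression from Fact~\ref{global and local loss}, apply the triangle inequality over the $K$ clients, and use the non-expansiveness of binary-mask Hadamard products twice (once to strip the mask from the gradient difference, once on the weight difference) with Assumption~\ref{L-smoothness} applied to the sparsified arguments in between. Nothing is missing; the argument matches the paper step for step.
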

\begin{equation}
	\begin{split}
		\| \nabla f(\bm w_1) - \nabla f(\bm w_2) \|  \leq   L  \|\bm w_{1}-\bm w_{2} \| .
	\end{split}
\end{equation}
\begin{proof}
	\begin{equation}
		\begin{split}
			\| \nabla f(\bm w_1) - \nabla f(\bm w_2) \| & =  \left \| \frac{1}{K} \sum_k (\bm m_{k}^* \odot \nabla F_k ( \bm m_k^* \odot \bm w_1)  - \bm m_{k}^*  \odot \nabla F_k ( \bm m_{k}^* \odot \bm w_2) ) \right \|   \\
			& \leq    \frac{1}{K}  \sum_k  \| \bm m_{k}^* \odot ( \nabla F_k (\bm m_k^* \odot \bm w_1)  - \nabla F_k (\bm m_k^* \odot \bm w_2 ) ) \|\\
			& \leq     \frac{1}{K}  \sum_k  \|\nabla F_k (\bm m_k^* \odot \bm w_1)  - \nabla F_k (\bm m_k^* \odot \bm w_2 ) \| \\
			& \overset{(a)}{\leq}      \frac{L}{K}  \sum_k  \|\bm m_k^* \odot (\bm w_1- \bm w_2)  \| \\
			& \leq   L   \|\bm w_{1}-\bm w_{2} \|
		\end{split}
	\end{equation}
	where the first equality holds by the Fact \ref{global and local loss} and inequality (a) is due to Assumption \ref{L-smoothness}. This completes the proof. 
\end{proof}
\begin{lemma}[Separating mean and variance of stochastic gradient] For $\bm m_{k,t} \in \{ 0,1\}^d$,  We theoretically prove the following upper-bound for the expected average gradient:  
	\label{seperate variance second lemma}
	\begin{equation}
		\mathbb{E}_t\left[   \left\| \frac{1}{S} \sum_{ k \in S_ {t}, \tau} \bm m_{k,t} \odot \bm g_{k,t,\tau}(\tilde {\bm w}_{k,t,\tau})    \right\|^{2} \right]  \leq  2  \mathbb{E}_t\left[    \left\| \frac{1}{S} \sum_{ k \in S_ {t}, \tau}  \bm m_{k,t} \odot \nabla F_k( \tilde{ \bm w}_{k,t,\tau})    \right\|^{2} \right]  +  \frac{2N   \sigma^2}{S}. \\
	\end{equation} 
	where $\mathbb{E}_t[\cdot]$ denotes the expectation over all the randomness of round $t$.
\end{lemma}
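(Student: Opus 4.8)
The plan is to cast the doubly-indexed sum into exactly the form required by the ``separating mean and variance'' lemma (Lemma~\ref{sperating mean}) and then invoke it directly. First I would define, for each pair $(k,\tau)$ with $k\in S_t$ and $\tau\in\{0,\dots,N-1\}$, the random vectors
\begin{equation*}
	\bm a_{k,\tau} = \tfrac{1}{S}\,\bm m_{k,t}\odot \bm g_{k,t,\tau}(\tilde{\bm w}_{k,t,\tau}),
	\qquad
	\bm \xi_{k,\tau} = \tfrac{1}{S}\,\bm m_{k,t}\odot \nabla F_k(\tilde{\bm w}_{k,t,\tau}),
\end{equation*}
so that the quantity inside the norm on the left-hand side is precisely $\sum_{k,\tau}\bm a_{k,\tau}$ and the quantity inside the norm on the right-hand side is $\sum_{k,\tau}\bm \xi_{k,\tau}$. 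There are $SN$ such terms in total.

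Next I would verify the two hypotheses of Lemma~\ref{sperating mean}. For the martingale-difference property, I would fix a linear ordering of the $SN$ pairs (say, lexicographically by step $\tau$ and then by client $k$) and condition on the filtration generated by the stochastic gradients drawn so far in round $t$. Since $\tilde{\bm w}_{k,t,\tau}$ is a deterministic function of the past iterates (hence measurable with respect to this filtration) and $\bm m_{k,t}$ is fixed at the start of round $t$, the unbiasedness part of Assumption~\ref{bounded variance} gives $\mathbb{E}[\bm g_{k,t,\tau}(\tilde{\bm w}_{k,t,\tau}) \mid \text{past}] = \nabla F_k(\tilde{\bm w}_{k,t,\tau})$, so that $\mathbb{E}[\bm a_{k,\tau}-\bm \xi_{k,\tau}\mid \text{past}]=\mathbf 0$; the cross-client independence of the sampled batches is what makes this conditional mean insensitive to the other clients' draws. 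For the variance bound, the $\tfrac{1}{S}$ scaling together with the second part of Assumption~\ref{bounded variance} yields $\mathbb{E}[\|\bm a_{k,\tau}-\bm \xi_{k,\tau}\|^2]\le \sigma^2/S^2$.

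I would then apply Lemma~\ref{sperating mean} with its index count set to the total number of terms $SN$ and per-term variance $\sigma^2/S^2$, which produces the additive term $2\,(SN)\,(\sigma^2/S^2)=2N\sigma^2/S$, matching the claimed constant exactly. Taking the expectation $\mathbb{E}_t$ through the whole bound --- and noting that the mean part $\sum_{k,\tau}\bm \xi_{k,\tau}$ is itself random, so it appears under $\mathbb{E}_t$ on the right-hand side in agreement with the stated target --- closes the argument.

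The main obstacle I anticipate is not the algebra but the bookkeeping needed to legitimately treat the doubly-indexed collection $\{\bm a_{k,\tau}-\bm \xi_{k,\tau}\}$ as a \emph{single} martingale-difference sequence: one must commit to a linear ordering of the $(k,\tau)$ pairs and argue that, under that ordering, the conditional-mean-zero property survives conditioning on all previously-ordered increments. This requires combining two distinct sources of structure --- the within-trajectory conditional unbiasedness of each $\bm g_{k,t,\tau}$ given $\tilde{\bm w}_{k,t,\tau}$, and the statistical independence of the batches sampled by different clients --- so that each increment remains mean-zero no matter which earlier terms are revealed. Once this filtration is set up correctly, the variance accounting and the invocation of Lemma~\ref{sperating mean} are entirely routine.
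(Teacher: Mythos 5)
Your proof is correct and takes essentially the same route as the paper's: both view the $SN$ scaled vectors $\frac{1}{S}\bm m_{k,t}\odot \bm g_{k,t,\tau}(\tilde{\bm w}_{k,t,\tau})$ as the sequence fed into Lemma \ref{sperating mean}, bound each term's deviation from its conditional mean by $\sigma^2/S^2$ via Assumption \ref{bounded variance}, and read off the additive constant $2\cdot SN\cdot \sigma^2/S^2 = 2N\sigma^2/S$. Your explicit ordering-and-filtration verification of the martingale-difference hypothesis is a detail the paper leaves implicit (it only checks the variance bound and then invokes the lemma), but it is the same underlying argument.
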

\begin{proof} 
	We view $\frac{1}{S} \bm m_{k,t} \odot \bm g_{k,t,\tau}(\tilde{\bm w}_{k,t,\tau})$ for all $k$ and $\tau$ as stochastic vectors. By the unbiasedness of stochastic gradient, we know their variance satisfies:
	\begin{equation}
		\begin{split}
			&||\frac{1}{S} \bm m_{k,t} \odot \bm g_{k,t, \tau}(\tilde{\bm w}_{k,t,\tau})- \mathbb{E}[ \frac{1}{S}  \bm m_{k,t} \odot \bm g_{k,t, \tau}(\tilde{\bm w}_{k,t,\tau}) ||^2] \\
			=&||\frac{1}{S} \bm m_{k,t} \odot (\bm g_{k,t, \tau}(\tilde{\bm w}_{k,t,\tau})-  \nabla F_k (\tilde{\bm w}_{k,t,\tau}) ) ||^2 \\
			\leq& \frac{ \sigma^2}{S^2}   \\ 
		\end{split}
	\end{equation} 
	where  the last inequality is due to Assumption \ref{bounded variance}.\par
	As per the variance given above, and  directly apply  Lemma \ref{sperating mean}, the claim immediately shows. 
\end{proof}

\begin{lemma}[Drift towards Synchronized Point]
	\label{Drift towards Synchronized Point}
	For any $t \in \{1,\dots,T\}$, $\tau \in \{0,\dots,N \}$, and learning rate satisfies $\eta_t = \sqrt{\frac{1}{16 L^2 N^2p_t T }}$, we have the following claim: 
	\begin{equation}
		\begin{split}
			\frac{1}{K}\sum_{ k }  \mathbb{E}_t \left[ \|  \tilde{ \bm w}_{k,t,\tau} -   \tilde{\bm {w}}_{k,t}   \|^2  \right ] \leq &   5N\eta_t^2   (\sigma^2 + 6N  G^2 ) + 30N^2 \eta_t^2 B^2
		\end{split}
	\end{equation} 
	where  $\mathbb{E}_t[\cdot]$ denotes the expectation over all the randomness of round $t$.
\end{lemma}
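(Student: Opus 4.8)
The plan is to bound the per-round client drift by unrolling the local update into a sum of stochastic gradients, turning the resulting expression into a \emph{self-referential} inequality in the drift itself, and then using the stated learning-rate condition to close that recursion. First I would unroll Fact~\ref{local step fact}: since $\tilde{\bm w}_{k,t,0}=\tilde{\bm w}_{k,t}$ is the synchronized point, telescoping the local step gives
\[
\tilde{\bm w}_{k,t,\tau} - \tilde{\bm w}_{k,t} = -\eta_t \sum_{s=0}^{\tau-1}\bm m_{k,t}\odot \bm g_{k,t,s}(\tilde{\bm w}_{k,t,s}),
\]
so after squaring and taking $\mathbb{E}_t$ the drift equals $\eta_t^2$ times the expected squared norm of a sum of stochastic gradients. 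I would then apply the separating-mean-and-variance Lemma~\ref{sperating mean} with $\bm a_s=\bm m_{k,t}\odot \bm g_{k,t,s}$ and $\bm \xi_s=\bm m_{k,t}\odot\nabla F_k$ (a martingale difference sequence with per-term variance $\le\sigma^2$ by Assumption~\ref{bounded variance}); this replaces the stochastic sum by the deterministic gradient sum plus an additive $2\tau\sigma^2\le 2N\sigma^2$ term.

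Next I would control the deterministic term $\big\|\sum_s \bm m_{k,t}\odot\nabla F_k(\tilde{\bm w}_{k,t,s})\big\|^2$. Applying the power-mean form of Cauchy--Schwarz (Lemma~\ref{cauchy}) pulls the sum out with a factor $N$, after which I would split each summand with the relaxed triangle inequality (Lemma~\ref{relaxed triangle}) into three parts: (i) the mask-surrogate term $\bm m_{k,t}\odot\nabla F_k(\tilde{\bm w}_{k,t,s})-\bm m_k^*\odot\nabla F_k(\tilde{\bm w}_{k,t})$, (ii) the dissimilarity term $\bm m_k^*\odot\nabla F_k(\tilde{\bm w}_{k,t})-\nabla f(\bm w_t)$, and (iii) the global gradient $\nabla f(\bm w_t)$. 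Part (ii) is bounded by $G$ via Assumption~\ref{bounded gradient dissimilarity} and Fact~\ref{global and local loss}, and part (iii) by $B$ via Assumption~\ref{bounded gradient}. The crucial observation is that part (i) is exactly the numerator in the definition of $p_t$, so combining the definition of $p_t$ with $L$-smoothness (Assumption~\ref{L-smoothness}) its squared norm is at most $p_t L^2\|\tilde{\bm w}_{k,t,s}-\tilde{\bm w}_{k,t}\|^2$, i.e. $p_t L^2$ times the drift at step $s$.

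Assembling these pieces and averaging over $k$ then yields an inequality of the form
\[
D_\tau \le 6\eta_t^2 N^2 p_t L^2\, \bar D + \eta_t^2\left(6N^2G^2 + 6N^2B^2 + 2N\sigma^2\right),
\]
where $D_\tau=\frac1K\sum_k \mathbb{E}_t\|\tilde{\bm w}_{k,t,\tau}-\tilde{\bm w}_{k,t}\|^2$ and $\bar D=\max_{s\le N}D_s$; the drift reappears on the right-hand side precisely through part (i). The decisive step is to take the maximum over $\tau$ and substitute the learning-rate condition $\eta_t^2\le \frac{1}{16L^2N^2 p_t T}$, which forces the self-referential coefficient $6\eta_t^2 N^2 p_t L^2\le \frac{3}{8T}<1$; I can then move the $\bar D$ term to the left and solve, absorbing the recursion and leaving a bound of the claimed form $5N\eta_t^2(\sigma^2+6NG^2)+30N^2\eta_t^2 B^2$ (the stated constants being a convenient loosening of what the algebra produces).

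I expect the main obstacle to be exactly this recursion-closing step: the gradient decomposition unavoidably reintroduces the drift through part (i), and it is only because the $p_t$-calibrated learning rate makes $\eta_t^2 p_t L^2 N^2$ of order $1/T$ that the self-reference can be absorbed rather than blowing up. A secondary subtlety is bookkeeping the masks consistently, i.e. ensuring that the $\tilde{\bm w}_{k,t}$ appearing in part (i) is genuinely the synchronized point $\tilde{\bm w}_{k,t,0}=\bm m_{k,t}\odot\bm w_t$ so that the $p_t$ bound produces precisely the step-$s$ drift $\|\tilde{\bm w}_{k,t,s}-\tilde{\bm w}_{k,t}\|^2$ and the inequality is truly self-referential.
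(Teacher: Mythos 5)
Your proposal is correct, and it shares the paper's essential ingredients: the same three-way decomposition of the masked gradient into (i) a surrogate-mask term that the definition of $p_t$ together with $L$-smoothness (Assumption \ref{L-smoothness}) converts back into the drift, (ii) a dissimilarity term bounded by $G$ (Assumption \ref{bounded gradient dissimilarity}), and (iii) the global gradient bounded by $B$ (Assumption \ref{bounded gradient}), plus a martingale treatment of the stochastic noise and the $p_t$-calibrated learning rate to tame the self-reference. What is genuinely different is how the self-referential inequality is closed. The paper never unrolls the iterates: it writes a one-step recursion in $\tau$, splits terms with the relaxed triangle inequality (Lemma \ref{relaxed triangle}, with $a=2N-1$), absorbs the drift term into a per-step factor $1+\frac{1}{2N-1}+6N\eta_t^2p_tL^2\le 1+\frac{1}{N-1}$, and then unrolls this scalar recursion geometrically, using $(1+\frac{1}{N-1})^{N}-1\le 5$ to produce the constants $5N$ and $30N^2$. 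You instead unroll the weights completely, apply Lemma \ref{sperating mean} once to the whole sum, and close via a fixed-point absorption: in your notation $D_\tau\le c\,\bar D+b$ with $c=6\eta_t^2N^2p_tL^2\le\frac{3}{8T}<1$, hence $\bar D\le b/(1-c)\le\frac{8}{5}b$. Your route buys strictly tighter constants (e.g. $\frac{16}{5}N\eta_t^2\sigma^2$ in place of $5N\eta_t^2\sigma^2$), confirming your remark that the stated bound is a loosening of what the algebra gives, and it avoids the geometric bookkeeping; the paper's route has the minor advantage that finiteness of the drift is automatic by induction from the zero drift at $\tau=0$. Accordingly, your "move $\bar D$ to the left and solve" step tacitly assumes $\bar D<\infty$; this is easily repaired by observing that your inequality in fact only involves $\max_{s\le\tau-1}D_s$ on the right (the unrolled sum stops at $\tau-1$), so induction on $\tau$ delivers both finiteness and the bound. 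One shared imprecision, inherited rather than introduced: invoking Assumption \ref{bounded gradient dissimilarity} for part (ii) glosses over the fact that $\nabla f(\bm w_t)$ averages gradients evaluated at client-specific sparse weights, exactly as in the paper's bounding of its term $Q$.
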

\begin{proof}
	We follow the basic techniques from \cite{reddi2020adaptive} to prove this lemma. We first assume that:
	\begin{itemize}
		\item $O=  \bm m_{k,t} \odot  \bm g_{k,t,\tau-1}(\tilde{\bm w}_{k,t,\tau-1} )   - \bm m_{k,t} \odot \nabla F_k( \tilde{\bm w}_{k,t,\tau-1})$
		\item $P= \bm m_{k,t} \odot \nabla F_k( \tilde{\bm w}_{k,t, \tau-1}) -  \bm m_{k,t} \odot \nabla F_k( \tilde{\bm w}_{k,t})$
		\item $Q=  \bm m_{k,t} \odot \nabla F_k( \tilde{\bm w}_{k,t}) -  \nabla f( \bm w_{t})$
	\end{itemize}
	Then we  expand $\tilde{\bm w}_{k,t,\tau}$ as follows:
	\begin{equation}
		\label{one round progress initial} 
		\begin{split}
			& \frac{1}{K}\sum_{ k } \mathbb{E}_t \left[ \|   \tilde{\bm w}_{k,t,\tau} -   \tilde{\bm {w}}_{t}   \|^2  \right ]\\	 
			\overset{\text{Fact \ref{local step fact}}}= 	&  \frac{1}{K}\sum_{ k }  \mathbb{E}_t  \left[ \|   \tilde{\bm w}_{k,t,\tau-1} -   \tilde{\bm {w}} _{t} - \eta_t \bm m_{k,t}  \odot  \bm g_{k,t,\tau-1} (\tilde{\bm w}_{k,t,\tau-1} )    \|^2  \right ] \\
			= 	&  \frac{1}{K}\sum_{ k }  \mathbb{E}_t  \left[ \|   \tilde{\bm w}_{k,t,\tau-1} -    \tilde{\bm {w}}_{t}  - \eta_t    (O + P  + Q  + \nabla f( \bm w_{t}) ) ||^2 \right ] \\
			\leq 	&     \frac{1+ \frac{1}{2N-1}}{K}\sum_{ k } \mathbb{E}_t  \|  \tilde{ \bm w}_{k,t,\tau-1} -   \tilde{\bm {w}}_{t} \|^2  +   \frac{\eta_t^2 }{K}\sum_{ k }  \mathbb{E}_t  \|O\|^2 +  \frac{6N \eta_t^2 }{K}\sum_{ k } \mathbb{E}_t  \| P\|^2   \\ 
			& +  \frac{6N\eta_t^2 }{K}\sum_{ k } \mathbb{E}_t  \|Q\|^2 +  \frac{6N\eta_t^2 }{K}\sum_{ k }  \|\nabla f( \bm w_{t})\|^2  
		\end{split}
	\end{equation} 
	where the last inequality follows from Lemma  \ref{relaxed triangle} and Lemma \ref{separate mean 0 variable}. Explicitly, we use Lemma \ref{separate mean 0 variable} to treat the stochastic term with $\mathbb{E}_t  \|O\|^2$, and then we use Lemma \ref{relaxed triangle} with $a=2N-1$ to separate the other four terms.   \par
	Then we proceed by separately bounding the components in the above inequality.\par
	\textbf{Bounding the second term:}
	\begin{equation}
		\begin{split}
			\frac{\eta_t^2  }{K} \sum_{ k } \mathbb{E}_t 	\| O\|^2  &= \frac{\eta_t^2  }{K} \sum_{ k }  \mathbb{E}_t    \| \bm m_{k,t}   \odot \bm g_{k,t,\tau-1}(\tilde{\bm w}_{k,t,\tau-1} )   - \bm m_{k,t} \odot \nabla F_k( \tilde{\bm w}_{k,t,\tau-1}) \|^2 \\ 
			&= \frac{\eta_t^2 }{K} \sum_{ k }  \mathbb{E}_t  \|   \bm m_{k,t} \odot( \bm g_{k,t,\tau-1}(\tilde{\bm w}_{k,t,\tau-1} )   -  \nabla F_k( \tilde{\bm w}_{k,t,\tau-1}) ) \|^2   \\
			& \leq \frac{\eta_t^2}{K} \sum_{ k }  \mathbb{E}_t  \|    \bm g_{k,t,\tau-1}(\tilde{\bm w}_{k,t,\tau-1} )   -  \nabla F_k( \tilde{\bm w}_{k,t,\tau-1}) \|^2   \\
			&\leq \eta_t^2 \sigma^2  
		\end{split}
	\end{equation}
	where  the last inequality holds by Assumption \ref{bounded variance}. \par 
	\textbf{Bounding the third term:} 
	\begin{equation}
		\begin{split}
			\frac{6N\eta_t^2 }{K}\sum_{ k } \mathbb{E}_t 	\| P\|^2 
			=& 	\frac{6N\eta_t^2 }{K}\sum_{ k } \mathbb{E}_t  \| \bm m_{k,t} \odot \nabla F_k( \tilde{\bm w}_{k,t, \tau-1}) -  \bm m_{k}^* \odot \nabla F_k( \tilde{\bm w}_{k,t}) \|^2 \\
			= & 	\frac{6N\eta_t^2 p_t  }{K}\sum_{ k }  \mathbb{E}_t  \|  \nabla F_k( \tilde{\bm w}_{k,t,\tau-1}) -  \nabla F_k( \tilde{\bm w}_{k,t}) \|^2 \\
			\leq & 	(6N\eta_t^2  p_t L^2) \frac{1}{K} \sum_k \mathbb{E}_t  \| \tilde{\bm w}_{k,t,\tau-1} -\bm \tilde{\bm w}_{k,t}   \|^2 \\
		\end{split}
	\end{equation}
	where the last equation holds by the definition of $p_t$.
	\par
	\textbf{Bounding the fourth term:}
	\begin{equation}
		\begin{split}
			&\frac{6N\eta_t^2 }{K}\sum_{ k } \mathbb{E}_t 	\| Q\|^2 \\
			=& \frac{6N\eta_t^2 }{K}\sum_{ k }  \mathbb{E}_t  \| \bm m_{k}^* \odot \nabla F_k( \tilde{\bm w}_{k,t}) -    \frac{1}{K}\sum_{k^{\prime}} \bm m_{k^{\prime}}^* \odot \nabla F_k( \tilde{\bm w}_{k,t}) \|^2 \\
			\leq & 6N \eta_t^2 G^2
		\end{split}
	\end{equation}
	where the last inequality holds by Assumption \ref{bounded gradient dissimilarity}.  \par
	\textbf{Putting together:} Plugging all the components into Eq.(\ref{one round progress initial}) , the following result immediately follows:
	\begin{equation}
		\begin{split}
			&\frac{1}{K}\sum_{ k } \mathbb{E}_t  \left[ \|   \tilde{\bm w}_{k,t,\tau} -   \tilde{\bm {w}}_{k,t}   \|^2  \right ]\\
			\leq & (1+ \frac{1}{2N-1}+ 6N \eta_t^2  p_t L^2) \frac{1}{K} \sum_{k}  \mathbb{E}_t  \|   \tilde{\bm w}_{k,t,\tau-1} -   \tilde{\bm {w}}_{k,t} \|^2  + \eta_t^2  \sigma^2    + 6N  \eta_t^2 G^2   + 6N\eta_t^2   \|\nabla f( \bm w_{t})\|^2 \\   
			\leq&  (1 + \frac{1}{N-1} )  \frac{1}{K} \sum_{k}  \mathbb{E}_t  \|   \tilde{\bm w}_{k,t,\tau-1} -   \tilde{\bm {w}}_{k,t} \|^2  + \eta_t^2  ( \sigma^2 + 6N  G^2) + 6N\eta_t^2   \|\nabla f( \bm w_{t})\|^2, \\
		\end{split}
	\end{equation}
	where the last inequality holds by our assumption $\eta_t \leq  \sqrt{\frac{1}{16 L^2 N^2 p_t T}} $. \par
	\textbf{Unrolling the recursion}, we obtain the following results:
	\begin{equation}
		\begin{split}
			&\frac{1}{K}\sum_{ k } \mathbb{E}_t  \left[ \|   \tilde{\bm w}_{k,t,\tau} -   \tilde{\bm {w}}_{k,t}   \|^2  \right ]\\
			\leq & \sum_{\tau=0}^{N-1} (1+ \frac{1}{N-1})^{\tau} \left [ \eta_t^2 ( \sigma^2 + 6N G^2 ) + 6N\eta_t^2   \|\nabla f( \bm w_{t})\|^2  \right] \\
			\leq & (N-1) \times \left ( (1+ \frac{1}{N-1})^{N}-1\right ) \left [ \eta_t^2   (\sigma^2  + 6N G^2 ) + 6N\eta_t^2   \|\nabla f( \bm w_{t})\|^2  \right] \\
			\leq &   5N\eta_t^2   (\sigma^2 + 6N  G^2 ) + 30N^2 \eta_t^2 B^2\\
		\end{split}
	\end{equation}
	The second last inequality holds since $ \left ( (1+ \frac{1}{N-1})^{N}-1\right ) \leq 5$ for $N \geq 1$, and the last inequality holds by Assumption \ref{bounded gradient}. This completes the proof. 
\end{proof}

\subsection{Formal Proof }
We start our proof by expanding $f(\bm w_{t+1})$ under its smoothness condition (see Lemma \ref{Smoothness of f}), which indicates that:
\begin{equation}
	\begin{split}
		\label{entrance}
		&\mathbb{E}_t \left[f \left(  {{\bm w}}_{t+1} \right) \mid \bm {w}_{t} \right] \\
		\leq & f (  {\bm {w}}_{t}) -   \left \langle \nabla f ( {\bm w}_{t} ), \mathbb{E}_t [ {\bm {w}}_{t+1}-  {\bm {w}}_{t} ] \right \rangle  + \frac{ L}{2} \mathbb{E}_t ||  {\bm {w}}_{t+1}-  {\bm {w}}_{t}   ||^2  \\
		\overset{\text {Fact \ref{server aggregation fact}}} = &   f \left(   {\bm {w}}_{t}\right)-   \eta_t   \mathbb{E}_t \left [ \left\langle \nabla f \left(   {\bm {w}}_{t} \right), \frac{1} {S}   \sum_{ k \in S_t }   \bm U_{k,t} \right\rangle \right ]+  \frac{ L}{2} \mathbb{E}_t\left\| \frac{1} {S} \sum_{ k \in S_ {t}}   \bm U_{k,t} \right\|^{2} \\
		\overset{\text {Fact \ref{local update fact}}} = &   f \left(   {\bm {w}}_{t}\right)-  \frac{\eta_t}{N}    \left\langle N\nabla f \left(   {\bm {w}}_{t} \right) , \mathbb{E}_t \left [ \frac{1} {K}   \sum_{ k, \tau }  \bm m_{k,t} \odot \nabla F_k( \tilde{\bm w}_{k,t,\tau}) \right ] \right\rangle  +  \frac{  L}{2} \mathbb{E}_t \left\| \frac{1} {S} \sum_{ k \in S_ {t}}   \bm U_{k,t} \right\|^{2} \\
		\leq &   f \left(   {\bm {w}}_{t}\right)-  \frac{\eta_t N}{2} ||    \nabla f \left(   {\bm {w}}_{t} \right) ||^2 + \underbrace{ \frac{\eta_t }{2N} \mathbb{E}_t || \frac{1} {K}   \sum_{ k, \tau }  \bm m_{k,t} \odot \nabla F_k( \tilde{\bm w}_{k,t,\tau}) - N \nabla f \left(   {\bm {w}}_{t} \right) ||^2}_{T_1}  \\  & \qquad \qquad \qquad \qquad \qquad \qquad \qquad \qquad \qquad \qquad \qquad +  \underbrace{ \frac{ L}{2} \mathbb{E}_t \left\| \frac{1} {S} \sum_{ k \in S_ {t}}   \bm U_{k,t} \right\|^{2} }_{T_2} \\
	\end{split}
\end{equation} 
where the last inequality holds since $-ab \leq \frac{1}{2} ((b-a)^2-a^2)$, and $\mathbb{E}_t[\cdot]$ is the expectation over all the randomness in round $t$. \par
In the following, we shall separately bound $T_1$ and $T_2$.  \par
\textbf{Bounding $T_1$:} 
\begin{equation}
	\begin{split}
		T_1=  &     \frac{\eta_t }{2N} \mathbb{E}_t  || \frac{1} {K}   \sum_{ k, \tau }  \bm m_{k,t} \odot \nabla F_k( \tilde{\bm w}_{k,t,\tau}) - N \nabla f \left(   {\bm {w}}_{t} \right) ||^2\\
		\overset{\text{Fact } \ref{global and local loss}}=&     \frac{\eta_t }{2N} \mathbb{E}_t  || \frac{1} {K}   \sum_{ k, \tau }  \bm m_{k,t} \odot \nabla F_k( \tilde{\bm w}_{k,t,\tau}) - N \frac{1}{K} \sum_k   \bm m_{k}^* \odot \nabla F_k (  \tilde{\bm w}_{k,t}) ||^2\\
		= &     \frac{\eta_t }{2N} \mathbb{E}_t  || \frac{1} {K}   \sum_{ k, \tau }  ( \bm m_{k,t} \odot \nabla F_k( \tilde{\bm w}_{k,t,\tau}) -     \bm m_{k}^* \odot \nabla F_k (  \tilde{\bm w}_{k,t}) ) ||^2\\
		\overset{(a)}\leq &     \frac{\eta_t }{2K}  \sum_{ k, \tau }  \mathbb{E}_t  || \bm m_{k,t} \odot \nabla F_k( \tilde{\bm w}_{k,t,\tau}) -    \bm m_{k}^* \odot \nabla F_k (  \tilde{\bm w}_{k,t})  ||^2\\
		\overset{(b)}= & \frac{\eta_t p_{t}  }{2K}  \sum_{ k, \tau } \mathbb{E}_t  \|\nabla F_k( \tilde{\bm w}_{k,t,\tau}) -\nabla F_k( \tilde{\bm w}_{k,t}) \|^2   \\
		\leq & \frac{\eta_t  L^2 p_t }{2K}  \sum_{k, \tau } \mathbb{E}_t [  ||   \tilde{\bm w}_{k,t,\tau} -     \tilde{\bm w}_{k,t}  ||^2 ]\\
	\end{split}
\end{equation} 
where inequality (a) is due to Cauchy-Schwarz inequality (i.e., Lemma \ref{cauchy}), (b) follows from the definition $p_{t} = \max_{t \in [k], n \in [N]} \left \{ \frac{ || \bm m_{k,t} \odot \nabla F_k( \tilde{\bm w}_{k,t,\tau}) -    \bm m_{k}^* \odot \nabla F_k (  \tilde{\bm w}_{k,t})  ||^2}{\|\nabla F_k( \tilde{\bm w}_{k,t,\tau}) -\nabla F_k( \tilde{\bm w}_{k,t}) \|^2} \right \}$. The last inequality holds by Assumption \ref{L-smoothness}.
\par
Plugging the results of Lemma \ref{Drift towards Synchronized Point}, we obtain that:
\begin{equation}
	\begin{split}
		T_1 & \leq \frac{\eta_t  L^2 p_t N }{2} (5N\eta_t^2   (\sigma^2 + 6N  G^2 ) + 30N^2 \eta_t^2 B^2)  \\
		&\leq \frac{5N^2 \eta_t^3  L^2 p_t }{2}  (\sigma^2 + 6N G^2 ) +  15 N^3\eta_t^3  L^2  B^2 p_t
	\end{split}
\end{equation} 
\textbf{Bounding $T_2$:} 
\begin{equation}
	\begin{split}	
		\label{the third term}
		T_2
		= & \frac{ L}{2} \mathbb{E}_t \left\| \frac{1} {S} \sum_{ k \in S_ {t}}   \bm U_{k,t} \right\|^{2} \\
		\overset{\text{Fact \ref{local update fact}}}= & \frac{ L \eta_t^2}{2} \mathbb{E}_t\left[    \left\| \frac{1}{S} \sum_{ k \in S_ {t}, \tau} \bm m_{k,t} \odot \bm g_{k,t,\tau}(\bm \tilde{\bm w}_{k,t,\tau})    \right\|^{2} \right] \\
		\overset{(a)}  \leq  & L\eta_t^2  \mathbb{E}_t\left[    \left\| \frac{1}{S} \sum_{ k \in S_ {t}, \tau} \bm m_{k,t} \odot \nabla F_k(\bm \tilde{\bm w}_{k,t,\tau})    \right\|^{2} \right] + \frac{N   L  \eta_t^2 \sigma^2}{S}\\
		\leq  &   L\eta_t^2   N \sum_\tau \underbrace{\mathbb{E}_t\left[\left\| \frac{1}{S} \sum_{ k \in S_ {t}} \bm m_{k,t} \odot \nabla F_k(\bm \tilde{\bm w}_{k,t,\tau})    \right\|^{2} \right]}_{ T_{3}} + \frac{N  L  \eta_t^2 \sigma^2}{S}\\ 
	\end{split}
\end{equation} 
where (a) is obtained as per Lemma \ref{seperate variance second lemma}. \par
\textbf{Bounding $ T_{3}$:} 
\begin{equation}
	\begin{split}
		& T_{3} \\
		= &\frac{1}{S^2}  \mathbb{E}_t  \left \langle  \sum_{ i \in [K] } \mathbb{I}_{\{ i \in S_t \}}\bm m_{i,t} \odot \nabla F_{i}( \tilde{\bm w}_{i,t,\tau}) ,    \sum_{ j \in [K]} \mathbb{I}_{\{ j \in S_t \}}\bm m_{{j},t} \odot \nabla F_{j}( \tilde{\bm w}_{j,t,\tau}) \right \rangle \\
		= &\frac{1}{S^2} \mathbb{E}_t \left [  \sum_{ i , j \in [K], j \neq i,\tau}  \mathbb{E}_{S_t}[\mathbb{I}_{\{ i \in S_t \cap j \in S_t \}} ] \left \langle  \bm m_{i,t} \odot \nabla F_{i}( \tilde{\bm w}_{i,t,\tau}) ,   \bm m_{{j},t} \odot \nabla F_{j}( \tilde{\bm w}_{j,t,\tau}) \right \rangle \right. \\  
		& \qquad \qquad \left . + \sum_{ i } \mathbb{E}_{S_t}[ \mathbb{I}_{\{ i \in S_t \}} ]\| \bm m_{{i},t} \odot \nabla F_{i}( \tilde{\bm w}_{i,t,\tau})  \|^2  \right]  \\
		= &\frac{1}{S^2} \mathbb{E}_t \left [  \sum_{ i , j \in [K], j \neq i}  \frac{S(S-1)}{K(K-1)} \left \langle  \bm m_{i,t} \odot \nabla F_{i}( \tilde{\bm w}_{i,t,\tau}) ,   \bm m_{{j},t} \odot \nabla F_{j}( \tilde{\bm w}_{j,t,\tau}) \right \rangle + \sum_{ i } \frac{S}{K}\| \bm m_{{i},t} \odot \nabla F_{i}( \tilde{\bm w}_{i,t,\tau})  \|^2  \right]  \\
		= &\frac{1}{S^2} \mathbb{E}_t \left [  \sum_{ i , j \in [K]}  \frac{S(S-1)}{K(K-1)} \left \langle  \bm m_{i,t} \odot \nabla F_{i}( \tilde{\bm w}_{i,t,\tau}) ,   \bm m_{{j},t} \odot \nabla F_{j}( \tilde{\bm w}_{j,t,\tau}) \right \rangle  \right . \\ 
		&\left . \qquad \qquad \qquad + \sum_{ i } \frac{S(K-S)}{K(K-1)}\| \bm m_{{i},t} \odot \nabla F_{i}( \tilde{\bm w}_{i,t,\tau})  \|^2  \right]  \\
		\leq  & \mathbb{E}_t \left [  \underbrace{  \frac{1}{K^2}\| \sum_{ k} \bm m_{k,t} \odot \nabla F_k( \tilde{\bm w}_{k,t,\tau})  \|^2 }_{T_{4}} + \underbrace{\frac{(K-S)}{S K(K-1)} \sum_{ k } \| \bm m_{{k},t} \odot \nabla F_{k}( \tilde{\bm w}_{k,t,\tau})  \|^2}_{T_{5}}  \right]  \\
	\end{split}	
\end{equation} 
where the last inequality holds since $\frac{S-1}{S K(K-1)} =\frac{S-1}{S K^2-SK} \leq\frac{S}{S K^2} = \frac{1}{K^2}   $. \par
\textbf{Bounding $T_4$}:
\begin{equation}
	\begin{split}
		T_{4}  = &    \left\|   (\frac{1}{K} \sum_{ k  } \bm m_{k,t} \odot \nabla F_k( \tilde{\bm w}_{k,t,\tau})-  \nabla f(  \bm w_{t}) ) +  \nabla f(  \bm w_{t}) )  \right\|^{2} \\
		\overset{\text{Lemma \ref{cauchy}}}\leq & 2\left\|   \frac{1}{K} \sum_{ k  } \bm m_{k,t} \odot \nabla F_k( \tilde{\bm w}_{k,t,\tau})- \nabla f(  \bm w_{t})  \right\|^{2}+ 2 \left\|   \nabla f(  \bm w_{t})  \right \|^2 \\
		\overset{\text{Fact \ref{local update fact}}}= & 2\left\|   \frac{1}{K} \sum_{ k  } (\bm m_{k,t} \odot \nabla F_k( \tilde{\bm w}_{k,t,\tau})-  \bm m_{k}^* \odot \nabla  F_k(  \tilde{\bm w}_{k,t}) ) \right\|^{2}+ 2  \left\|  \nabla f(  \bm w_{t})  \right \|^2 \\
		\overset{\text{Lemma \ref{cauchy}}}\leq  &   \frac{2}{K} \sum_{ k}    \left\| \bm m_{k,t} \odot \nabla F_k( \tilde{\bm w}_{k,t,\tau})-  \bm m_{k}^* \bm \nabla  F_k(  \tilde{\bm w}_{k,t})  \right\|^{2}+ 2  \left\|  \nabla f(  \bm w_{t})  \right \|^2 \\
		=   &   \frac{2 p_t }{K} \sum_{ k}  \|\nabla F_k( \tilde{\bm w}_{k,t,\tau}) -\nabla F_k( \tilde{\bm w}_{k,t}) \|^2 + 2 \left\|  \nabla f(  \bm w_{t})  \right \|^2 \\
		\leq  &   \frac{2 p_t L^2 }{K} \sum_{ k} \|\tilde{\bm w}_{k,t,\tau} - \tilde{\bm w}_{k,t} \|^2 + 2 \left\|  \nabla f(  \bm w_{t})  \right \|^2 \\
	\end{split}	
\end{equation} 
\par
where the last equation is obtained by the definition of $p_{t}$ and the last inequality holds by L-smoothness. \par
\textbf{Bounding $T_5$}:
\begin{equation}
	\begin{split}
		T_{5}=&\frac{(K-S)}{S K(K-1)} \sum_{ k } \| \bm m_{{k},t} \odot \nabla F_{k}( \tilde{\bm w}_{k,t,\tau} )\|^2   \\
		\leq &\frac{3(K-S)}{S K(K-1)} \sum_{ k } (\| \bm m_{{k},t} \odot \nabla F_{k}( \tilde{\bm w}_{k,t,\tau}) -\bm m_{{k}}^* \odot \nabla F_{k}( \tilde{\bm w}_{k,t}) \|^2  \\
		& \qquad \qquad \qquad \qquad \qquad + \| \bm m_{{k}}^* \odot \nabla F_{k}( \tilde{\bm w}_{k,t}) -\nabla f(  \bm w_{t}) \|^2 +  \| \nabla f(  \bm w_{t})  \|^2 )   \\
		\overset{(a)}\leq&  \frac{3 L^2(K-S) p_t }{SK(K-1)} \sum_{k} \|\tilde{\bm w}_{k,t,\tau}- \tilde{\bm w}_{k,t} \|^2 +  \frac{3 (K-S)G^2}{S(K-1)}   \\
		&  \quad  +  \frac{3 (K-S)}{S(K-1)} \| \nabla f(  \bm w_{t}) \|^2 \\
		\leq& \frac{3 L^2  p_t }{K} \sum_{k}  \|\tilde{\bm w}_{k,t,\tau}- \tilde{\bm w}_{k,t} \|^2 + 3 G^2 +  3\| \nabla f(  \bm w_{t}) \|^2,
	\end{split}
\end{equation}
where the last inequality holds since $\frac{K-S}{S(K-1)} \leq 1 $ under the condition $S\geq 1$. Inequality (a)  is obtained by Assumption \ref{L-smoothness} and the definition of $p_t$.  \par
\textbf{Summing $T_4$ and $T_5$}, we have the following bounding for $T_3$:
\begin{equation}
	\begin{split}
		T_3 &\leq \frac{5 L^2 p_t }{K} \sum_{k}  \|\tilde{\bm w}_{k,t,\tau}- \tilde{\bm w}_{k,t} \| +    3 G^2  +  5  \| \nabla f(  \bm w_{t}) \|^2 \\
		&  \leq \frac{5 L^2 p_t}{K} \sum_{k}  \|\tilde{\bm w}_{k,t,\tau}- \tilde{\bm w}_{k,t} \| +   3 G^2 +  5  B^2 
	\end{split}
\end{equation}
Plugging Lemma \ref{Drift towards Synchronized Point} into the above inequality, we have:
\begin{equation}
	\begin{split}
		&T_{3} \\
		\leq &  5  L^2 p_t \left ( 5N\eta_t^2   (\sigma^2 + 6N  G^2 ) + 30N^2 \eta_t^2 B^2   \right )+3 G^2 +  5  B^2   \\
		= &  25 \eta_t^2  L^2 N p_t   (\sigma^2 + 6N G^2) + 150N^2\eta_t^2  L^2 p_t B^2 + 3G^2+5 B^2 
	\end{split}
\end{equation}
\textbf{Plugging $T_3$ into Inequality (\ref{the third term})}, we bound $T_2$ as follows:
\begin{equation}
	\begin{split}
		T_2 \leq  &    150N^4\eta_t^4  L^3 p_t  B^2+  25 \eta_t^4  L^3 N^3 p_t   (\sigma^2 + 6NG^2)  \\
		& \qquad \qquad \qquad  + 3\eta_t^2 N^2  L G^2  +5 L \eta_t^2   N^2 B^2 + \frac{N  L  \eta_t^2 \sigma^2}{S} .\\ 
	\end{split}
\end{equation}
\textbf{ Plugging $T_2$ and $T_1$ into R.H.S of Inequality (\ref{entrance})}, we obtain that:
\begin{equation}
	\begin{split}
		&\mathbb{E}\left[f \left(  {{\bm w}}_{t+1} \mid \bm {w}_{t} \right) \right] \leq f \left(   {\bm {w}}_{t}\right)-   \frac{\eta_t N}{2} ||    \nabla f \left(   {\bm {w}}_{t} \right) ||^2 \\
		+  & \frac{\eta_t^2 N^2 L p_t }{2} \left ( (50 \eta_t^2 L^2 N+5\eta_t  L  )    (\sigma^2 +6NG^2) \right .\\
		+ & \left.  (300N^2\eta_t^2  L^2 + 30 N\eta_t  L  )  B^2 + \frac{1}{p_t} (6 G^2  + \frac{2    \sigma^2}{NS}+10B^2  ) \right)  .
	\end{split}
\end{equation}
Taking expectation over the randomness before round $t$ towards both sides of the inequality, it yields:
\begin{equation}
	\begin{split}
		\mathbb{E} [||    \nabla f \left(   {\bm {w}}_{t} \right) ||^2] \leq & \frac{2(\mathbb{E}\left[ f \left(  {{\bm w}}_{t} \right) \right ] )- \mathbb{E}\left[ f \left(  {{\bm w}}_{t+1}   \right)\right] }{\eta_t N } 	+   \eta_t N L p_t  \left ( (50 \eta_t^2 L^2 N+5\eta_t  L  )    (\sigma^2 +6NG^2) \right .\\
		+ & \left.  (300N^2\eta_t^2  L^2 + 30 N\eta_t  L  )  B^2 + \frac{1}{p_t} (6 G^2  + \frac{2    \sigma^2}{NS}+10B^2  ) \right)
	\end{split}
\end{equation}
Plugging $\eta_t \leq \sqrt{\frac{1}{16 L^2 N^2 p_t T }} $ into the above inequality, we have:
\begin{equation}
\label{near final}
	\begin{split}
		\mathbb{E} [||    \nabla f \left(   {\bm {w}}_{t} \right) ||^2] \leq \frac{\sqrt{p_t} V_t }{ \sqrt{T} } +  \frac{C_1}{\sqrt{ Tp_t}} +\frac{C_2}{\sqrt{T}}
	\end{split}
\end{equation}
where 
$V_t= 8L (\mathbb{E}\left[ f \left(  {{\bm w}}_{t} \right) \right ] - \mathbb{E}\left[ f \left(  {{\bm w}}_{t+1}   \right)\right])$, $ C_1 = \frac{25}{32NT} (\sigma^2+6NG^2)+\frac{75B^2}{16T} + \frac{3G^2}{2}   + \frac{   \sigma^2}{2NS}+\frac{5B^2}{2}   $ and $C_2 = \frac{5}{16N \sqrt{T} }(\sigma^2+6NG^2)+ \frac{15B^2}{8 \sqrt{T}}$.
\par

Assume $\hat{p} = \max_{t \in [T]} \sqrt{p_t}$, $\tilde{p} = \frac{1}{T}\sum_{t=0}^{T-1} \frac{1} {\sqrt{p_t}}$, and $V=8L (f(\bm w_0) - f(\bm w^*))$ . Summing Eq. (\ref{near final}) from $t=0,\dots,T-1$, the following result  reaches our final conclusion: 
\begin{equation}
	\begin{split}
		\frac{1}{T}\sum_{t=0}^{T-1}	\mathbb{E} ||    \nabla f \left(   {\bm {w}}_{t} \right) ||^2 & \leq \frac{\hat{p} V +\tilde{p} C_1+C_2 }{ \sqrt{T} } \\
	\end{split}
\end{equation}
\par
This shows the claim. 
{\color{blue}
}

\end{document}